 \newcommand{\defeq}{\mathrel{\mathop:}=}
\crefname{algorithm}{algorithm}{algorithms}
\Crefname{algorithm}{Algorithm}{Algorithms}
\crefname{assumption}{assumption}{assumptions}
\Crefname{assumption}{Assumption}{Assumptions}
\crefname{theorem}{theorem}{theorems}
\Crefname{theorem}{Theorem}{Theorems}
\crefname{lemma}{lemma}{lemmas}
\Crefname{lemma}{Lemma}{Lemmas}
\theoremstyle{plain}
\newtheorem{theorem}{Theorem}[section]
\newtheorem{lemma}[theorem]{Lemma}
\theoremstyle{plain}
\newtheorem{definition}[theorem]{Definition}
\newtheorem{assumption}[theorem]{Assumption}
\theoremstyle{plain}
\newtheorem{remark}[theorem]{Remark}
\def\@fnsymbol#1{\ensuremath{\ifcase#1\or \dagger\or \star\or
   \mathsection\or \mathparagraph\or \|\or **\or \dagger\dagger
   \or \ddagger\ddagger \else\@ctrerr\fi}}
\newcommand{\cone}{\hyperlink{c1}{\texttt{C1}}\xspace}
\newcommand{\ctwo}{\hyperlink{c2}{\texttt{C2}}\xspace}
\newcommand{\yh}[1]{{\color{blue}Yuzheng: #1}}
\title{Revisiting Scalarization in Multi-Task Learning: \\ A Theoretical Perspective}
\author{%
Yuzheng Hu$^1$\thanks{Corresponding author.} 
\quad Ruicheng Xian$^1$\thanks{Equal contribution.} \quad Qilong Wu$^1$$^\star$ \quad  Qiuling Fan$^2$ \quad Lang Yin$^1$ \quad Han Zhao$^1$$^\dagger$ \\
$^1$Department of Computer Science \quad $^2$Department of Mathematics \\
  University of Illinois Urbana-Champaign   \\
  {\footnotesize{\url{{yh46,rxian2,qilong3,qiuling2,langyin2,hanzhao}@illinois.edu}}}
}
\begin{document}

\maketitle

\begin{abstract}

Linear scalarization, i.e., combining all loss functions by a weighted sum, has been the default choice in the literature of multi-task learning (MTL) since its inception. In recent years, there is a surge of interest in developing Specialized Multi-Task Optimizers (SMTOs) 
that treat MTL as a multi-objective optimization problem. However, it remains open whether there is a fundamental advantage of SMTOs over scalarization. In fact, heated debates exist in the community comparing these two types of algorithms, mostly from an empirical perspective. 
To approach the above question, in this paper, we revisit scalarization from a theoretical perspective. We focus on linear MTL models and study whether scalarization is capable of \textit{fully exploring} the Pareto front. Our findings reveal that, in contrast to recent works that claimed empirical advantages of scalarization, scalarization is inherently incapable of full exploration, especially for those Pareto optimal solutions that strike the balanced trade-offs between multiple tasks. More concretely, when the model is under-parametrized, we reveal a \textit{multi-surface} structure of the feasible region and identify necessary and sufficient conditions for full exploration. This leads to the conclusion that scalarization is in general incapable of tracing out the Pareto front. 
Our theoretical results partially answer the open questions in Xin~et~al.~(2021), and provide a more intuitive explanation on why scalarization fails beyond non-convexity. 
We additionally perform experiments on a real-world dataset using both scalarization and state-of-the-art SMTOs. The experimental results not only corroborate our theoretical findings, but also unveil the potential of SMTOs in finding balanced solutions, which cannot be achieved by scalarization.


\end{abstract}

\vspace{-1mm}
\section{Introduction}
\vspace{-1mm}

When labeled data from multiple tasks are available, multi-task learning (MTL) aims to learn a joint model for these tasks simultaneously in order to improve generalization~\citep{caruana1997multitask,zhang2021survey,ruder2017overview}. One common approach towards this goal {is} learning a joint feature representation that is shared among multiple tasks, where each task then uses a separate task-specific head for its own prediction. 
{The hope is that by sharing a joint feature representation, the effective sample size for feature learning will be larger than that for single-task learning, hence leading to better generalization across related tasks.}
This simple and natural idea has proven to be successful in many applications, including object segmentation~\citep{misra2016cross}, natural language processing~\citep{collobert2008unified}, autonomous driving~\citep{yu2020bdd100k}, to name a few.

One typical approach to learning the shared feature model from multiple tasks is called \emph{linear scalarization}, where the learner provides  
{a set of convex coefficients (in the form of a probability vector)}
to linearly combine the loss from each task and reduces the learning problem into a scalar optimization problem. 
Since its inception, scalarization has been the default approach for MTL~\citep{caruana1997multitask,ruder2017overview,zhang2018overview,crawshaw2020multi,zhang2021survey}, mainly due to its simplicity, scalability, as well as empirical success. However, in recent years, variants of the multiple gradient descent algorithm (MGDA)~\citep{fliege2000steepest} have been proposed and used in MTL~\citep{sener2018multi, yu2020gradient,chen2020just,liu2021conflict, liu2021towards, zhou2022convergence}.
{Referred to as specialized multi-task optimizers (SMTOs),}
the main idea behind them is to cast the learning problem explicitly as a multi-objective (multi-criteria) optimization problem, 
{wherein the goal is to enforce convergence to a Pareto optimal solution across tasks}.
In the literature, {a main driving force behind these studies}
{has been} {the belief} that they are better than linear scalarization when \emph{conflicting gradients} among tasks are present~\citep{yu2020gradient,liu2021conflict}.
Nevertheless, recent empirical results tell an opposite story~\citep{kurin2022defense,xin2022current,lin2022reasonable}---with proper choices of hyperparameters and regularization techniques, scalarization 
{matches or even surpasses SMTOs}.
Hence, it largely remains an open problem {regarding} whether there is an inherent advantage of using {SMTOs} for MTL over scalarization, at least from a theoretical perspective.

In this paper, we revisit scalarization from a theoretical perspective. We study whether scalarization is capable of fully exploring the Pareto front in linear MTL. Technically, we aim to understand the following question on \emph{full exploration}:
\begin{quote}
    \itshape
    For every Pareto optimum $v$ on the Pareto front induced by multiple tasks, does there exist a weighted combination of task losses such that the optimal solution of the linearly scalarized objective corresponds to $v$?
\end{quote}
Our fundamental thesis is that, if the answer to the above question is affirmative, then there is no inherent advantage of SMTOs over scalarization on the representation level; otherwise, there might be cases where SMTOs can converge to Pareto optimal solutions that cannot be obtained via any scalarization objective. 
Our findings suggest that, in contrast to recent works that {claimed empirical advantages of scalarization}, it is inherently incapable of full exploration. Specifically, our contributions are as follows:
\begin{enumerate}[leftmargin=*,noitemsep,nolistsep,topsep=0pt]
    \item We reveal a \textit{multi-surface} structure of the feasible region, and identify a common failure mode of scalarization, which we refer to as ``gradient disagreement'' (\Cref{subsec:multi-surface}). This serves as an important first step towards understanding when and why scalarization fails to achieve a specific Pareto optimum.
    \item We provide necessary and sufficient conditions for scalarization to fully explore the Pareto front, which lead to the conclusion that scalarization is incapable of tracing out the Pareto front in general (\Cref{subsec:exploration}). To the best of our knowledge, we are the first to establish both necessary and sufficient conditions for full exploration in the non-convex setting.
    \item We proceed to discuss implications of our theoretical results (\Cref{subsec:implications}), demonstrating how they answer the open questions in~\citet{xin2022current}, and provide a remedy solution via randomization to counteract the fundamental limitation of scalarization (\Cref{subsec:remedy}). 
    \item We additionally perform experiments on a real-world dataset using both scalarization and state-of-the-art SMTOs (\Cref{sec:exp}). The experimental results not only corroborate our theoretical findings, but also demonstrate the potential of SMTOs in finding balanced solutions. 
\end{enumerate}

\vspace{-1mm}
\section{Preliminaries}
\label{sec:prelim}
\vspace{-1mm}

{We introduce the setup of multi-task learning (MTL) and relevant notations, along with essential background on the optimal value of the scalarization objective and the concept of Pareto optimality.}


\textbf{Problem setup.}\quad
Following prior works, we consider multi-task learning with $k$ {regression} tasks~\citep{Wu2020Understanding,du2021fewshot}. 
The training examples $\{(x_j,y_1^j,\cdots, y_k^j)\}_{j=1}^n$ are drawn i.i.d.\ from some $\mu \in \calP(\calX \times \calY_1 \times \cdots \times \calY_k)$, where $\calX\subseteq\RR^p$ is the \textit{shared} input space and $\calY_i\subseteq\RR$ is the output space for task $i$. 
We denote the label of task $i$ as $y_i = (y_i^1, \cdots, y_i^n)^\top \in \RR^n$, and concatenate the inputs and labels in matrix forms: $X=[x_1, \cdots, x_n]^\top \in \RR^{n \times p}$ and $Y = [y_1, \cdots, y_k] \in \RR^{n \times k}$. 

We focus on {linear MTL}, in which a linear neural network serves as the backbone for prediction. Despite its simplicity, this setting has attracted extensive research interest in the literature~\citep{maurer2006bounds,maurer2016benefit,Wu2020Understanding,du2021fewshot,tripuraneni2021provable}.
Without loss of generality, the model that will be used for our study is a two-layer linear network\footnote{Any deep linear networks for MTL can be reduced to the case of a two-layer linear network, with one layer of shared representations and one layer of task-specific heads.}, which consists of a shared layer $W\in \RR^{p\times q}$ and a task-specific head $a_i \in \RR^{q}$ for each task. The prediction for task $i$ on input $x$ is given by $f_i(x,W,a_i)\coloneqq x^\top W a_i$. We adopt Mean Squared Error (MSE) as the loss function, and the training loss for task $i$ is given by $L_i(W,a_i) \coloneqq \enVert{X W a_i - y_i}_2^2$. 


\textbf{Notation.}\quad
We use $[k]$ to denote the index set $\{1,\cdots,k\}$. For two square matrices $M$ and $N$ with the same dimension, we write $M \succeq N$ if $M-N$ is positive semi-definite, and $M \succ N$ if $M-N$ is positive definite. 
{We denote $\diag(D)$ as the diagonal matrix whose diagonal entries are given by $D=\{ d_i\}_{i\in [k]}$, a set of real numbers; 
and $\operatorname{span}(V)$ as the linear space spanned by $V=\{v_i\}_{i \in [k]}$, a set of vectors.}
For two vectors with the same size, we use $\odot$ and $\oslash$ to denote entry-wise multiplication and division. {When there is no ambiguity}, operations on vectors (e.g., absolute value, square, square root) are performed entry-wise.
For any matrix $M$, $M^{\dag}$ stands for its Moore-Penrose inverse. We use $\operatorname{range}(M)$ and $\operatorname{rank}(M)$ to denote its column space and rank, respectively.   
 The expression $\|\cdot\|$ denotes the $\ell_2$ norm of a vector, $\langle \cdot, \cdot \rangle$ the inner product of two vectors,  $\|\cdot\|_F$ the Frobenius norm of a matrix, and $\rho(\cdot)$ the spectral radius of a square matrix.

\textbf{Optimal value of scalarization (restated from~\citet{Wu2020Understanding}).}~
The objective function of scalarization is a weighted sum of losses: $\sum_{i=1}^k \lambda_i L_i(W,a_i)$. Here, the convex coefficients $\{\lambda_i\}_{i\in[k]}$ satisfy $\lambda_i \ge 0$ and $\sum_i \lambda_i=1$, and are typically obtained via grid search~\citep{vandenhende2021multi}.  For each task, we can decompose its loss based on the orthogonality principle~\citep{kay1993fundamentals}
\begin{equation}\label{eq:loss_decomp}
    L_i(W,a_i) \coloneqq \enVert{X W a_i - y_i}_2^2 = \enVert{X W a_i - \hat y_i}^2 + \enVert{\hat y_i - y_i}^2,
\end{equation}
where $\hat y_i = X(X^\T X)^\dag X^\T y_i$ is the optimal predictor for task $i$. Denote $\Lambda = \text{diag}(\{\sqrt{\lambda_i}\}_{i=1}^k)$, $A = [a_1, \cdots, a_k] \in \RR^{q\times k}$, 
$\hat Y = [\hat y_1, \cdots, \hat y_k]\in\RR^{n\times k}$. The scalarization objective can be compactly expressed as
\begin{equation}\label{eq:matrix_loss}
    \enVert{(XWA- Y)\Lambda}_F^2
    = \enVert{(XWA- \hat Y)\Lambda}_F^2
    + \enVert{(\hat Y- Y)\Lambda}_F^2.
\end{equation}
Throughout this paper, we assume the optimal linear predictions {$\{\hat y_i\}_{i \in [k]}$} are linearly independent.\footnote{As long as $n \ge k$, an arbitrarily small perturbation can make $\{\hat y_i\}_{i\in [k]}$ linearly independent. We mainly adopt this assumption for ease of presentation. }
{This leads to two distinct cases, depending on the width of the hidden layer $q$:}
\begin{enumerate}[leftmargin=*,noitemsep,nolistsep,topsep=0pt]
    \item When $q\ge k$, the minimum value of the first term is zero, i.e., every task can simultaneously achieve its optimal loss $\|\hat y_i-y_i \|^2$. 

    \item When $q<k$, the minimum value of the first term is given by the rank-$q$ approximation. In particular, let $\hat Y \Lambda = \sum_{i=1}^k \sigma_i u_iv_i^\T$ be the SVD of $\hat Y\Lambda$.  {According to the} Eckart–Young–Mirsky Theorem~\citep{eckart1936approximation}, the best rank-$q$ approximation of $\hat Y \Lambda$ under the Frobenius norm is given by the truncated SVD: $\hat Y_{q,\Lambda} \Lambda \coloneqq  \sum_{i=1}^q \sigma_i u_iv_i^\T$. Hence, the minimum training risk is $\| (\hat Y_{q,\Lambda} - \hat Y) \Lambda\|_F^2  + \|(\hat Y - Y)\Lambda \|_F^2$. 
    {This regime, referred to as the \textit{under-parameterized} regime, constitutes the main focus of our study.}
\end{enumerate}


\looseness=-1
\textbf{Pareto optimality.}~
In the context of MTL, a point $\theta^*$ (shared model parameters) is a \textit{Pareto optimum} (PO) if there does not exist a $\theta'$ that achieves strictly better performance on at least one task and no worse performance on all other tasks compared to $\theta^*$. Formally, for a set of objectives $\{L_i\}_{i \in [k]}$ {that need} to be {minimized}, if $L_i(\theta') < L_i(\theta^*)$ for some point $\theta'$, then there must exist another task $j$ such that $L_j(\theta') > L_j(\theta^*)$.
The \textit{Pareto front} (a.k.a. Pareto frontier) is the collection of objective values $(L_1(\theta), \cdots, L_k(\theta))$ corresponding to all PO.
\vspace{-1mm}
\section{Linear scalarization through the lens of exploring the Pareto front}
\label{sec:PF}
\vspace{-1mm}
In this section, we provide a fine-grained analysis of the feasible region (formally defined in~\Cref{subsec:multi-surface}), revealing its multi-surface structure. We proceed to analyze whether scalarization is capable of tracing out the Pareto front, and identify necessary and sufficient conditions for full exploration. 

{\textbf{Reformulation.}\quad}
Recall that minimizing~\Cref{eq:loss_decomp} is equivalent to minimizing the squared loss $\enVert{X W a_i - \hat y_i}^2$, where $\hat y_i = X(X^\T X)^\dag X^\T y_i$ is the optimal linear predictor for task $i$. Additionally, the task-specific heads can be optimized independently for each task. Thus, if we let $Z=XW$ and $a_i = (Z^{\top}Z)^\dag Z^{\top}\hat y_i$ (the optimal task-specific head), then minimizing~\Cref{eq:loss_decomp} is equivalent to \begin{equation}
    \min_{Z=XW} \|Z(Z^{\top}Z)^\dag Z^{\top}\hat y_i-\hat y_i\|^2,
\end{equation}
which can be further transformed into
\begin{align*}
    \max_{P_Z}~\hat y_i^{\top}P_Z\hat y_i, \quad P_Z =Z(Z^{\top}Z)^\dag Z^{\top}.
\end{align*}
Therefore, the original MTL problem can be reformulated into the following multi-objective problem 
\begin{equation}\label{eq:max_moo}
    \max_{P_Z} \left( \hat y_1^{\top}P_Z\hat y_1, \cdots, \hat y_k^{\top}P_Z\hat y_k\right),
\end{equation}
where $P_Z$ is a projection matrix with rank at most $q$ {(i.e., hidden layer width)} and maps to the column space of $X$. Each quadratic term can be further expressed as $\|P_Z \hat y_i\|^2$. From here it is clear (also see~\Cref{sec:prelim}) that when $q\ge k$ (the over-parametrized regime), the $k$ quadratic terms can be maximized simultaneously, meaning that the network has sufficient capacity to fit the target tasks. In this case, the Pareto front reduces to a \textit{singleton}, and can be achieved by scalarization with {any} choice of convex coefficients. In fact, this holds true even for general non-linear networks as long as it is wide enough, i.e., over-parametrized, and we defer more discussions to~\Cref{adxsubsec:singleton}.

The main focus of our study, in contrast, lies in the \textit{under-parametrized regime} $q < k$, where  {various} tasks compete for the network's capacity for better prediction. We comment that this regime has been the focus of previous work and demonstrated to be beneficial in MTL \citep{Wu2020Understanding, wang2022can}. 
We study two extreme cases {within} the under-parameterized regime, $q=1$ and $q=k-1$.
{By doing so, we anticipate that our results will hold for other $k$'s in between.}

\vspace{-1mm}
\subsection{The multi-surface structure of the feasible region} \label{subsec:multi-surface}
\vspace{-1mm}

\begin{figure}[t]
\centering
\begin{minipage}[t]{.63\textwidth}
  \centering
  \includegraphics[align=c,width=1\linewidth]{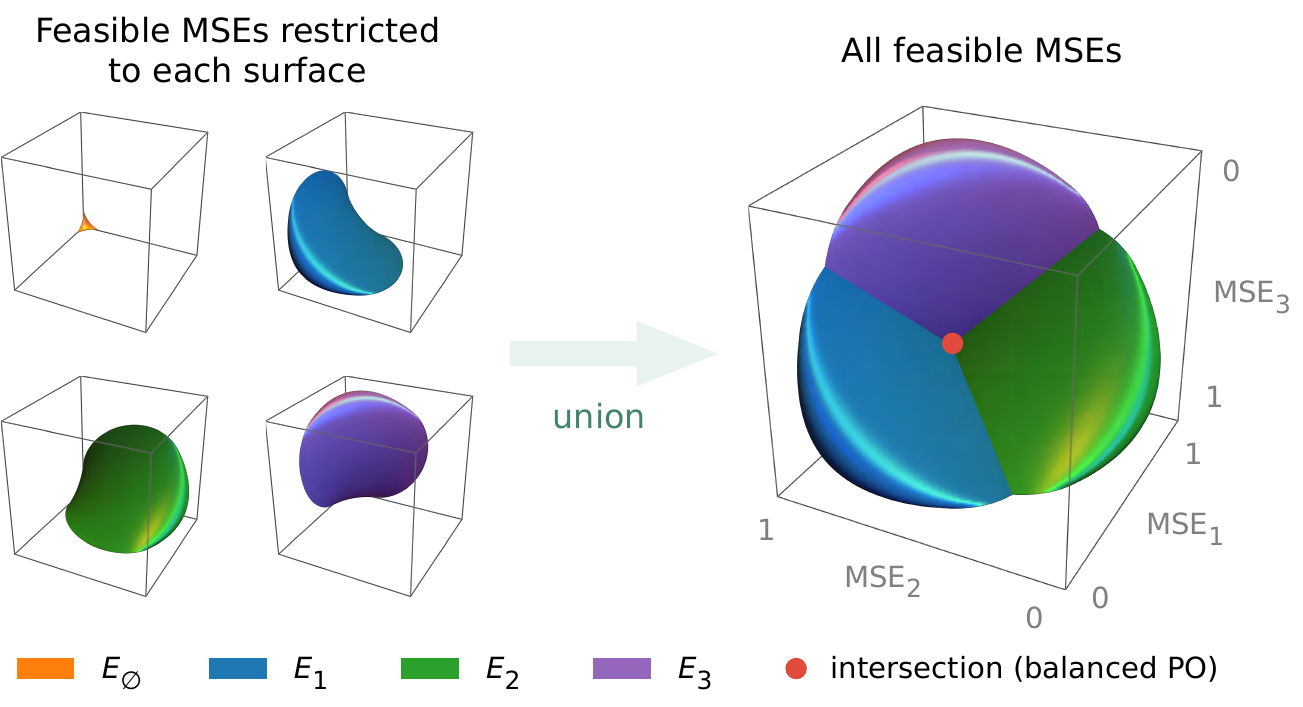}
  \captionof{figure}{Feasible MSEs of a three-task linear MTL problem with $q=1$, formed by a union of multiple feasible MSEs surfaces defined in~\Cref{thm:multi-surface-1}.  Note that the balanced PO located at the intersection of the surfaces is not achievable by scalarization. See~\Cref{adxsubsec:implementation} for implementation details.}
  \label{fig:feasible.surfaces}
\end{minipage}%
\hfill%
\begin{minipage}[t]{.33\linewidth}
  \centering
  \hspace{-1.5em}\includegraphics[align=c,width=0.83\linewidth]{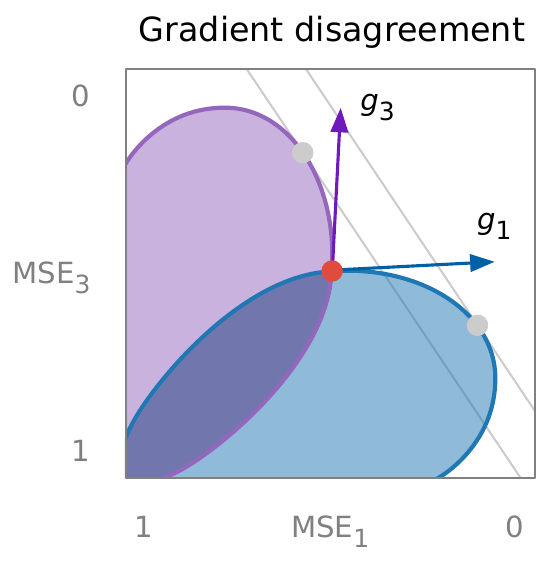}
  \captionof{figure}{When restricted to a single surface, each contains a (different) optimum point, and the gradients (w.r.t.\ minimizing total MSE) on each surface disagree at their intersection.}
  \label{fig:grad.disagreement}
\end{minipage}
\vspace{-1.5em}
\end{figure}


The objective function of task $i$ is $\|P_Z\hat y_i\|^2$, i.e., the square of the projection of $\hat y_i$ on $P_Z$. Since we are interested in the Pareto front, we can assume w.l.o.g. that $\operatorname{rank}(P_Z) =q$ and $\operatorname{range}(P_Z) \subset \operatorname{span}(\{\hat y_i\}_{i\in[k]})$ (otherwise there is a loss of model capacity, and the corresponding point will no longer be a PO). 
{Consequently, we define the feasible region as}
\begin{equation}
\resizebox{.94\linewidth}{!}{
$\calF_q:=\left\{\left( \hat{y_1}^{\top}P_Z\hat{y_1}, \cdots, \hat{y_k}^{\top}P_Z\hat{y_k}\right)^{\top} \bigg| \ P_Z^2 = P_Z, \ \operatorname{rank}(P_Z) = q, \ \operatorname{range}(P_Z) \subset \operatorname{span}(\{\hat y_i\}_{i\in[k]})\right\}.$
}
\end{equation}

We now provide a fine-grained characterization of $\calF_1$ (corresponding to $q=1$) in the following theorem, which also serves as the basis for $q=k-1$. 
\begin{theorem} \label{thm:multi-surface-1}
Let $G = \hat{Y}^{\top} \hat{Y}$ and $Q = G^{-1}$. Define
\begin{equation}
Q_{i_1\cdots i_l} := D_{i_1\cdots i_l}QD_{i_1\cdots i_l}, \quad 0\le l \le k, \ 1 \le i_1 < \cdots < i_l \le k,
\end{equation}  
where $D_{i_1\cdots i_l}$ is a diagonal matrix with all diagonal entries equal to $1$, except for the entries at positions $i_1, \cdots, i_l$ which are equal to $-1$.  Consider the following surface parameterized by $v$:
\begin{equation} \label{eq:surface-1}
E_{i_1\cdots i_l} := \left\{v: \sqrt{v}^{\top}Q_{i_1\cdots i_l}\sqrt{v} = 1\right\}.
\end{equation}
We have:
\begin{equation} 
\mathcal{F}_1 = \bigcup_{0 \le l \le k} \bigcup_{1\le i_1 < \cdots < i_l \le k} E_{i_1\cdots i_l},
\end{equation}
i.e., $\calF_1$ is the union of $2^{k-1}$ surfaces in the non-negative orthant (note $E_{i_1\cdots i_l} = E_{\overline{i_1\cdots i_l}}$ by symmetry). 
\end{theorem}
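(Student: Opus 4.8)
The plan is to convert the geometric description of rank-one projections into an algebraic identity and then read off the union of surfaces. Since $q=1$, and since (by the reductions already justified in the surrounding text) we may assume $\operatorname{rank}(P_Z)=1$ with $\operatorname{range}(P_Z)\subseteq\operatorname{span}(\{\hat y_i\}_{i\in[k]})$, every admissible $P_Z$ is the orthogonal projector onto a line in that span, hence of the form $P_Z=\hat Y c\,(c^\top G c)^{-1}c^\top\hat Y^\top$ for some nonzero $c\in\mathbb{R}^k$, where $G=\hat Y^\top\hat Y\succ 0$ by the linear-independence assumption on $\{\hat y_i\}$. A one-line computation gives $\hat y_i^\top P_Z\hat y_i=(\hat y_i^\top\hat Y c)^2/(c^\top G c)=(Gc)_i^2/(c^\top G c)$. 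I would then substitute $b\defeq Gc$: since $G$ is invertible, $b$ ranges over $\mathbb{R}^k\setminus\{0\}$ as $c$ does, and $c^\top G c=b^\top G^{-1}b=b^\top Q b>0$ with $Q=G^{-1}$. Hence $\mathcal{F}_1$ is exactly the image of $\mathbb{R}^k\setminus\{0\}$ under
\[
b\;\longmapsto\;\frac{1}{b^\top Q b}\bigl(b_1^2,\ldots,b_k^2\bigr)^\top,
\]
and for the achieving $b$ one has $\sqrt{v}=|b|/\sqrt{b^\top Q b}$ (square root and absolute value taken entrywise).

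It then remains to match this image with $\bigcup E_{i_1\cdots i_l}$. For the inclusion ``$\subseteq$'': given $b\neq 0$, let $\epsilon=\operatorname{sign}(b)\in\{\pm1\}^k$ (the sign on any zero coordinate is immaterial; set it to $+1$) and $D_\epsilon=\operatorname{diag}(\epsilon)$, so that $D_\epsilon|b|=b$ and therefore
\[
\sqrt{v}^\top D_\epsilon Q D_\epsilon\sqrt{v}=\frac{(D_\epsilon|b|)^\top Q(D_\epsilon|b|)}{b^\top Q b}=\frac{b^\top Q b}{b^\top Q b}=1,
\]
i.e.\ $v\in E_{i_1\cdots i_l}$ with $\{i_1,\ldots,i_l\}=\{i:\epsilon_i=-1\}$. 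For ``$\supseteq$'': given $v\ge 0$ with $\sqrt{v}^\top Q_{i_1\cdots i_l}\sqrt{v}=1$, set $b\defeq D_{i_1\cdots i_l}\sqrt{v}$; then $b^\top Q b=\sqrt{v}^\top Q_{i_1\cdots i_l}\sqrt{v}=1\neq 0$, so $b\neq 0$, and $b_i^2/(b^\top Q b)=v_i$, which exhibits $v\in\mathcal{F}_1$ via $c=Qb$, $z=\hat Y c$, $P_Z=zz^\top/\|z\|^2$. Finally $E_{i_1\cdots i_l}=E_{\overline{i_1\cdots i_l}}$ because $D_{\overline{i_1\cdots i_l}}=-D_{i_1\cdots i_l}$ leaves $\sqrt{v}^\top D_{i_1\cdots i_l}QD_{i_1\cdots i_l}\sqrt{v}$ unchanged, collapsing the $2^k$ sign patterns to $2^{k-1}$ distinct surfaces; and every coordinate on these surfaces is non-negative, automatically from $v_i=b_i^2/(b^\top Q b)$.

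The computation is routine once the substitution $b=Gc$ is in hand, so the genuinely delicate points lie elsewhere: (i) the WLOG reductions — that restricting to $\operatorname{rank}(P_Z)=1$ and $\operatorname{range}(P_Z)\subseteq\operatorname{span}(\{\hat y_i\})$ discards no Pareto-relevant feasible point (any violation strictly wastes model capacity, as argued in the surrounding text); and (ii) the bookkeeping at zero coordinates — one must check that $v_i=0$ forces $b_i=0$, contributing nothing to $b^\top Q b$ or to the quadratic forms so that the sign chosen there is irrelevant, and that the origin $v=0$ is correctly excluded since it would force $P_Z=0$, contradicting $\operatorname{rank}(P_Z)=1$.
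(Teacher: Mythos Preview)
Your proof is correct and takes essentially the same approach as the paper. The paper parametrizes the rank-one projector by a unit vector $s\in\operatorname{span}(\{\hat y_i\})$ and shows that $v=\hat Y^\top s$ traces out the ellipsoid $\{v:v^\top Qv=1\}$, after which squaring coordinates and reflecting to the non-negative orthant produces the union of surfaces; your parametrization by the coefficient vector $c$ with the substitution $b=Gc$ is the same computation up to normalization (indeed $b/\sqrt{b^\top Qb}=\hat Y^\top s$), and your direct verification of both inclusions against $\bigcup E_{i_1\cdots i_l}$ is a slightly more streamlined packaging of the same argument.
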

To help the readers gain some intuitions on the \textit{multi-surface structure} demonstrated in~\Cref{thm:multi-surface-1},
we visualize the feasible region of a three-task (i.e., $k=3$) example  in~\Cref{fig:feasible.surfaces} (more details at the end of the subsection). 
We also provide a proof sketch below (detailed proof in~\Cref{adxsubsec:proof-thm-multi-surface-1}).


\textbf{Proof sketch of~\Cref{thm:multi-surface-1}.}\quad
A rank-$1$ projection matrix can be expressed as $ss^{\top}$ with $\|s\|=1$. The feasible region can therefore be equivalently characterized as 
\begin{equation}  \label{eq:equiv}
    \calF_1 = \left\{\left(\langle\hat{y_1}, s\rangle^2, \cdots, \langle\hat{y_k}, s\rangle^2\right)^{\top} \bigg| \ \|s\|=1, \ s\in \operatorname{span}(\{\hat y_i\}_{i\in[k]})\right\}.
\end{equation}
Define $v_i = \langle\hat y_i, s\rangle$ and $v = (v_1,\cdots, v_k)^{\top}$. 
We will show the set of $v$ forms the boundary of an ellipsoid determined by $Q$. As a consequence, the set of $|v|$ is the union of multiple surfaces, each corresponding to the boundary of an ellipsoid (reflection of $Q$) in the non-negative orthant. The reflection is represented by the diagonal matrix $D_{i_1\cdots i_l}$ in the theorem statement.
Finally, $\calF_1$, which is the set of $v^2$, can be obtained by squaring the coordinates in the previous set. This finishes the proof of~\Cref{thm:multi-surface-1}.

Since the orthogonal complement of a $1$-dimensional subspace of $\operatorname{span}(\{\hat y_i\}_{i\in[k]})$ is a $(k-1)$-dimensional subspace, we immediately have the following result (proof deferred to~\Cref{adxsubsec:proof-thm-multi-surface-k}).

\begin{theorem} \label{thm:multi-surface-k}
    Let $t:= (\|\hat y_1\|^2, \cdots, \|\hat y_k\|^2)^{\top}$.
Consider the following surface parametrized by $v$:
\begin{equation} \label{eq:surface-k}
    I_{i_1\cdots i_l} = \left\{v: \sqrt{t-v}^{\top}Q_{i_1\cdots i_l}\sqrt{t-v} = 1\right\}.
\end{equation}
We have:
\begin{equation}
\mathcal{F}_{k-1} = \bigcup_{0 \le l \le k} \bigcup_{1\le i_1 < \cdots < i_l \le k} I_{i_1\cdots i_l},
\end{equation}
i.e., $\calF_{k-1}$ is the union of $2^{k-1}$ surfaces in the non-negative orthant ($I_{i_1\cdots i_l} = I_{\overline{i_1\cdots i_l}}$ by symmetry).
\end{theorem}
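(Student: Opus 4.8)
\quad The plan is to reduce the $q=k-1$ case to the $q=1$ case already resolved by~\Cref{thm:multi-surface-1}, using the fact that a rank-$(k-1)$ projection onto a subspace of the $k$-dimensional space $V:=\operatorname{span}(\{\hat y_i\}_{i\in[k]})$ is, within $V$, the orthogonal complement of a rank-$1$ projection. Concretely, I expect to establish the clean set identity $\calF_{k-1}=\{\,t-v : v\in\calF_1\,\}$ (entrywise difference) and then substitute the explicit description of $\calF_1$.

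First I would make the complement correspondence precise. By the standing linear-independence assumption, $\dim V = k$; let $P_V$ be the orthogonal projection onto $V$. If $P_Z$ is one of the projections appearing in the definition of $\calF_{k-1}$, then $\operatorname{rank}(P_Z)=k-1$ and $\operatorname{range}(P_Z)\subset V$, so the orthogonal complement of $\operatorname{range}(P_Z)$ inside $V$ is one-dimensional, spanned by a unit vector $s\in V$, and therefore $P_Z = P_V - ss^\top$. Conversely, any $P_V - ss^\top$ with $\|s\|=1$, $s\in V$, is a rank-$(k-1)$ projection with range inside $V$. Since each $\hat y_i\in V$ we have $P_V\hat y_i=\hat y_i$, hence
\[
  \hat y_i^\top P_Z\hat y_i = \|\hat y_i\|^2 - \langle \hat y_i, s\rangle^2 = t_i - \langle \hat y_i, s\rangle^2 .
\]
Comparing with the equivalent description~\eqref{eq:equiv} of $\calF_1$, this says exactly that $\calF_{k-1} = \{\,t - v : v\in\calF_1\,\}$; the translate $t-v$ remains in the non-negative orthant by Cauchy--Schwarz, since $\langle\hat y_i,s\rangle^2\le\|\hat y_i\|^2$.

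Next I would plug in the explicit form of $\calF_1$ from~\Cref{thm:multi-surface-1}. Translating the surface $E_{i_1\cdots i_l}=\{v:\sqrt{v}^\top Q_{i_1\cdots i_l}\sqrt{v}=1\}$ by $t$, i.e.\ setting $u=t-v$, turns the constraint into $\sqrt{t-u}^\top Q_{i_1\cdots i_l}\sqrt{t-u}=1$, which is precisely $I_{i_1\cdots i_l}$; taking the union over all $0\le l\le k$ and all index tuples then gives $\calF_{k-1}=\bigcup I_{i_1\cdots i_l}$. The symmetry $I_{i_1\cdots i_l}=I_{\overline{i_1\cdots i_l}}$ is inherited verbatim from $E_{i_1\cdots i_l}=E_{\overline{i_1\cdots i_l}}$, because $D_{\overline{i_1\cdots i_l}}=-D_{i_1\cdots i_l}$ and hence $Q_{\overline{i_1\cdots i_l}}=Q_{i_1\cdots i_l}$, which also explains why $2^k$ tuples collapse to $2^{k-1}$ distinct surfaces.

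The step I expect to require the most care is the first one: checking that the defining conditions of $\calF_{k-1}$ (rank exactly $k-1$, range inside $V$) are matched exactly — in both directions — by the unit vectors $s\in V$, so that the identification $\calF_{k-1}=t-\calF_1$ is an equality of sets and not merely an inclusion. Once that correspondence is in hand, the rest is a mechanical change of variables in the statement of~\Cref{thm:multi-surface-1}.
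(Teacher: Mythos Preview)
Your proposal is correct and follows essentially the same approach as the paper's own proof: both reduce the $q=k-1$ case to~\Cref{thm:multi-surface-1} via the orthogonal-complement correspondence inside $V=\operatorname{span}(\{\hat y_i\})$, obtaining the set identity $\calF_{k-1}=\{t-v:v\in\calF_1\}$ (the paper phrases this as the Pythagorean theorem). Your write-up is in fact more explicit than the paper's terse version, spelling out the bijection $P_Z=P_V-ss^\top$ and the symmetry count.
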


{\Cref{thm:multi-surface-1,thm:multi-surface-k} not only offer us a glimpse into the shape of the feasible region but also provide valuable insights into the fundamental limitation of linear scalarization.}
The key observation is that, 
{scalarization fails to}
reach the \textit{intersection} points of two (or more) surfaces when the gradients with respect to the two (or more) surfaces disagree (i.e., they lie in different directions). 
We refer to this phenomenon as \textit{gradient disagreement}, and provide an illustrating example in~\Cref{fig:grad.disagreement}. Essentially, the multi-surface structure that we have revealed 
 implies the abundance of intersection points on the feasible region {$\calF_q$}. 
{Moreover, if any of these points were to be an PO, scalarization would not be able to trace out the Pareto front (see~\Cref{adxsubsec:explanation} for a detailed explanation). 
}

We illustrate this observation using a concrete example with three tasks in~\Cref{fig:feasible.surfaces}. The feasible region consists of four surfaces, among which $E_\varnothing$ is dominated by other surfaces. The intersection of $E_1, E_2, E_3$ (the red point) is a PO that achieves balanced performance across tasks, yet scalarization cannot reach this point since it lies in a valley. 


\vspace{-1mm}
\subsection{Necessary and sufficient conditions for full exploration}  \label{subsec:exploration}
\vspace{-1mm}

In this section, we provide an in-depth analysis of the Pareto front, which is a subset of the feasible region. 
Concretely, 
we develop necessary and sufficient conditions for scalarization to fully explore the Pareto front. 



We first review some important concepts that will be frequently used in this section. Denote $\calQ := \{Q_{i_1\cdots i_l} \mid 0\le l \le k, \ 1 \le i_1 < \cdots < i_l \le k\}$ as the collection of matrices which define the surfaces in the previous section. Denote $\calG := \{G_{i_1\cdots i_l} = Q_{i_1\cdots i_l}^{-1} \mid 0\le l \le k, \ 1 \le i_1 < \cdots < i_l \le k\}$. 
Note $G_{i_1\cdots i_l} = D_{i_1\cdots i_l}\hat Y^{\top}\hat YD_{i_1\cdots i_l} = (\hat YD_{i_1\cdots i_l})^{\top}(\hat YD_{i_1\cdots i_l})$. Therefore,  $\calG$ is a collection of gram matrices which measure the \textit{task similarity}, and each element is obtained by flipping the signs of some optimal predictors. 
We use $\calE$ and $\calI$ to denote the collection of surfaces defined in~\Cref{eq:surface-1,eq:surface-k}, respectively. We also formally define a class of matrix as follows. 

\begin{definition}[Doubly non-negative matrix]
    A doubly non-negative matrix $M$ is a real positive semi-definite (PSD) matrix with non-negative entries.
\end{definition}

Our analysis begins with an observation that the distribution of PO across different surfaces plays a vital role in the success of scalarization. Specifically, we want to avoid the ``bad'' case like the one demonstrated in~\Cref{fig:feasible.surfaces}, where a PO lies on the intersection of multiple surfaces. A natural solution to circumvent this issue is to force the Pareto front to lie on a \textit{single} surface. We therefore draw our attention to this particular scenario, and provide a necessary and sufficient condition for such configuration in the following lemma.

\begin{lemma} \label{lem:PO-1} 
    {When $q=1$,}
    the Pareto front of the feasible region belongs to a single surface $E^* \in \calE$, if and only if the corresponding $G^*=(Q^*)^{-1} \in \calG$ is doubly non-negative. 
    We refer to the existence of such {a} doubly non-negative $G^*$ {in $\calG$} as \hypertarget{c1}{\texttt{C1}\xspace}.
\end{lemma}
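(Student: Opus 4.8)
The plan is to work with the parametrization from the proof of \Cref{thm:multi-surface-1}: setting $u=\hat Y^{\top}s$ and $Q=G^{-1}$, one has $\calF_1=\{u^{2}:u^{\top}Qu=1\}$ (coordinate-wise square), and $E_{i_1\cdots i_l}$ is the image under coordinate-wise squaring of the part of the ellipsoid $\{u^{\top}Qu=1\}$ with sign pattern $\pm D_{i_1\cdots i_l}\mathbf 1$; in particular $E_{\varnothing}=\{u^{2}:u\ge 0,\ u^{\top}Qu=1\}$, and $v\in E_{\varnothing}$ exactly when $v$ has a unit representative $s\in\operatorname{span}(\{\hat y_l\})$ with $\langle\hat y_l,s\rangle\ge 0$ for all $l$. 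Two reductions come first. (i) Every $G^{*}\in\calG$ equals $(\hat Y D_{i_1\cdots i_l})^{\top}(\hat Y D_{i_1\cdots i_l})$, hence is automatically PSD, so ``$G^{*}$ doubly non-negative'' means precisely ``$G^{*}$ has non-negative entries''. (ii) Replacing $\hat y_m$ by $-\hat y_m$ fixes $\calF_1$ and the Pareto front while permuting $\calE$ and $\calG$ by $S\mapsto S\triangle\{m\}$; so it suffices to prove the statement for $E^{*}=E_{\varnothing}$ and $G^{*}=G=\hat Y^{\top}\hat Y$, i.e., that the Pareto front lies in $E_{\varnothing}$ iff $G$ is entrywise non-negative.

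For \emph{necessity} I would argue at the axis points. Fix $i$; the point $v^{(i)}$ induced by $s=\hat y_i/\|\hat y_i\|$ has $i$-th coordinate $\|\hat y_i\|^{2}$, which by Cauchy--Schwarz is the maximum of the $i$-th objective over $\calF_1$, attained (by the equality case) only at $v^{(i)}$; hence $v^{(i)}$ is undominated, so Pareto optimal, so by hypothesis it lies in $E_{\varnothing}$. Its all-non-negative representative $\tilde s$ then satisfies $\langle\hat y_i,\tilde s\rangle=\|\hat y_i\|=\|\hat y_i\|\,\|\tilde s\|$, forcing $\tilde s=\hat y_i/\|\hat y_i\|$, and therefore $\langle\hat y_j,\hat y_i\rangle=\|\hat y_i\|\,\langle\hat y_j,\tilde s\rangle\ge 0$ for every $j$. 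Varying $i$ gives $G\ge 0$ entrywise.

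For \emph{sufficiency}, assume $G\ge 0$ entrywise. It is enough to show every $v=u^{2}\in\calF_1$ is dominated by a point of $E_{\varnothing}$, since then Pareto optimal points, being undominated, must themselves lie in $E_{\varnothing}$. I would obtain the dominating point from the convex quadratic program $\min\{w^{\top}Qw:w\ge|u|\}$: its minimizer, rescaled to the level set $\{w^{\top}Qw=1\}$, still satisfies $w\ge|u|\ge 0$ (so $w^{2}\in E_{\varnothing}$) and $w^{2}\ge u^{2}=v$, provided the optimal value $f(u)$ satisfies $f(u)\le u^{\top}Qu=1$. To bound $f(u)$ I would invoke Lagrangian duality, which for this linearly constrained QP gives $f(u)=\max_{\lambda\ge 0}\bigl(\lambda^{\top}|u|-\tfrac14\lambda^{\top}G\lambda\bigr)$, and then, with $\epsilon=\operatorname{sign}(u)$,
\begin{align*}
\lambda^{\top}|u|-\tfrac14\lambda^{\top}G\lambda
&\ \le\ (D_{\epsilon}\lambda)^{\top}u-\tfrac14(D_{\epsilon}\lambda)^{\top}G(D_{\epsilon}\lambda)\\
&\ \le\ \sup_{\eta\in\RR^{k}}\bigl(\eta^{\top}u-\tfrac14\eta^{\top}G\eta\bigr)=u^{\top}G^{-1}u=u^{\top}Qu,
\end{align*}
where the first inequality uses $\lambda^{\top}(G-D_{\epsilon}GD_{\epsilon})\lambda=4\sum_{i,j:\,\epsilon_i=1,\,\epsilon_j=-1}G_{ij}\lambda_i\lambda_j\ge 0$ (here double non-negativity enters) together with $\lambda^{\top}|u|=(D_{\epsilon}\lambda)^{\top}u$. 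Maximizing over $\lambda\ge 0$ yields $f(u)\le 1$.

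I expect the necessity direction and the two bookkeeping reductions to be routine. The crux is the sufficiency step: showing that double non-negativity forces \emph{every} feasible point---including those on surfaces $E_{i_1\cdots i_l}\neq E_{\varnothing}$---to be dominated from within $E_{\varnothing}$. The key move, and the place where the hypothesis is genuinely used, is to replace the (pointwise false) combinatorial comparison over sign patterns by the convex relaxation $\min\{w^{\top}Qw:w\ge|u|\}$ and bound it by duality, the operative fact being that $G-D_{\epsilon}GD_{\epsilon}$ is PSD on the non-negative orthant exactly when $G$ is doubly non-negative. Isolating this reduction---and checking that the rescaled minimizer really does dominate $v$ and lie on $E_{\varnothing}$---is the main thing I would need to be careful about.
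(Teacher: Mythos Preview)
Your proof is correct, and the necessity direction is essentially the same as the paper's: both single out the axis-maximizing points $v^{(i)}$ (which are Pareto optimal by uniqueness of the Cauchy--Schwarz maximizer) and read off $G_{ij}\ge 0$ from membership in $E_{\varnothing}$; the paper phrases this via the normal-vector condition $Q'\sqrt{p_i}\parallel e_i\Leftrightarrow G'e_i\ge 0$, but it is the same computation.

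The sufficiency direction, however, is a genuinely different argument. The paper proceeds geometrically: assuming $G$ entrywise non-negative, the cone $\calA=\operatorname{cone}(\{\hat y_i\})$ satisfies $\calA\subset\calA^{*}$, and for any unit $s$ one sets $s'=2P_{\calA^{*}}s-s$ (reflection across the closed convex dual cone). Using the Bourbaki--Cheney--Goldstein variational inequality for convex projections, the paper shows $\langle s',\hat y_i\rangle\ge|\langle s,\hat y_i\rangle|$ for all $i$, so $(\langle s',\hat y_i\rangle)_i\ge 0$ lands $v$ in $E_{\varnothing}$ after domination. Your route replaces this cone geometry by the convex QP $\min\{w^{\top}Qw:w\ge|u|\}$ and its Lagrangian dual; the place where double non-negativity enters---that $\lambda^{\top}(G-D_{\epsilon}GD_{\epsilon})\lambda\ge 0$ on the non-negative orthant---is a clean copositivity observation that substitutes for the projection inequality. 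Your argument is more algebraic and self-contained (no projection theory needed), while the paper's is more constructive, giving the dominating representative $s'$ by an explicit reflection formula. Both routes yield the same conclusion that every feasible point is dominated from within $E_{\varnothing}$; neither seems strictly simpler, but they illuminate different facets of why the condition works.
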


We outline the proof sketch  {below}. The detailed proof can be found in~\Cref{adxsubsec:proof-lem-PO-1}.

\textbf{Proof sketch of~\Cref{lem:PO-1}.}\quad
For the \textit{if} part, we {employ} a geometric argument. Specifically, denote the acute cone formed by the optimal predictors as $\calA$. We will show that, if $s$ (a unit vector in~\Cref{eq:equiv}) does not belong to the dual cone $\calA^*$, then its reflection w.r.t. the dual cone $s'$ is a better point than $s$. As a consequence, we can remove the absolute value from the proof sketch of~\Cref{thm:multi-surface-1}. This further implies that the Pareto front belongs to a single surface. 

For the \textit{only if} part, we consider the points which achieve maximum value along an axis. We will show these points are PO. Finally, we will show that if \cone does not hold, then these points must belong to different surfaces in $\calE$. 

We have a similar result for $q=k-1$. The details of {the} proof are deferred to~\Cref{adxsubsec:proof-lem-PO-k}.

\begin{lemma}   \label{lem:PO-k}
    {When $q=k-1$,}
    the Pareto front of the feasible region belongs to a single surface $I^* \in \calI$, if and only if the corresponding $Q^* \in \calQ$ is doubly non-negative. We refer to the existence of such {a} doubly non-negative $Q^*$ in $\calQ$ as \hypertarget{c2}{\texttt{C2}\xspace}. 
\end{lemma}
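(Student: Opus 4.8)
The plan is to reduce \Cref{lem:PO-k} to the ``lower'' Pareto front of $\calF_1$ via the complement relationship already used to derive \Cref{thm:multi-surface-k}, and then to mirror the two-part structure of the proof of \Cref{lem:PO-1}, with one essential modification in the ``only if'' direction. Writing $b_i = \langle \hat y_i, s\rangle$ and $t = (\|\hat y_1\|^2,\dots,\|\hat y_k\|^2)^\top$, \Cref{thm:multi-surface-k} expresses $\calF_{k-1} = \{\, t - b^2 : b^\top Q b = 1 \,\}$ with $I_{i_1\cdots i_l} = \{\, t - b^2 : b^\top Q b = 1,\ D_{i_1\cdots i_l} b \ge 0 \,\}$, while $\calF_1 = \{\, b^2 : b^\top Q b = 1 \,\}$ and $E_{i_1\cdots i_l}$ is the analogue of $I_{i_1\cdots i_l}$ without the shift by $t$. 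Since $v \mapsto t - v$ is an order-reversing bijection $\calF_{k-1}\to\calF_1$ carrying each $I_{i_1\cdots i_l}$ onto $E_{i_1\cdots i_l}$, a point is Pareto optimal in $\calF_{k-1}$ iff its image is Pareto-minimal in $\calF_1$, the Pareto front of $\calF_{k-1}$ lies on $I^*$ iff the Pareto-minimal set of $\calF_1$ lies on the associated $E^*$, and $I^*$, $E^*$ correspond to the same $Q^*\in\calQ$. Hence it suffices to show that the Pareto-minimal set of $\calF_1$ lies on a single $E^*\in\calE$ iff the associated $Q^*$ is doubly non-negative. Flipping the signs of the appropriate $\hat y_j$'s leaves $\calF_1$ and Pareto-minimality unchanged while permuting the surfaces and replacing $Q^*$ by the new Gram-inverse, so I may assume $E^* = E_{\varnothing}$ and $Q^* = Q$.

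For the ``if'' direction, suppose $Q$ is doubly non-negative and let $v^* = t - b^2$ be Pareto-maximal in $\calF_{k-1}$, with $b^\top Q b = 1$; put $\beta := \sqrt{t-v^*} = |b|$. As $Q$ has non-negative entries, $\beta^\top Q\beta = \sum_{i,j}Q_{ij}|b_i||b_j| \ge \sum_{i,j}Q_{ij}b_ib_j = 1$. If this were strict, the rescaling $b' := \beta/\sqrt{\beta^\top Q\beta}$ satisfies $b'^\top Q b' = 1$, $b' \ge 0$, and yields the feasible point $v' := t - (b')^2 = t - (t-v^*)/(\beta^\top Q\beta)$ with $v' - v^* = (t-v^*)\bigl(1 - 1/(\beta^\top Q\beta)\bigr) \ge 0$ and $v'\neq v^*$ (strict improvement wherever $b_i\neq 0$, and $b\neq 0$), contradicting Pareto-maximality. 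Hence $\beta^\top Q\beta = 1$, i.e.\ $\sqrt{t-v^*}^\top Q\sqrt{t-v^*} = 1$, so $v^*\in I_{\varnothing}$, and the whole Pareto front lies on $I_{\varnothing}=I^*$.

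For the ``only if'' direction I argue by contraposition: if $Q$ is not doubly non-negative then, after permuting task indices, $Q_{12} < 0$, and I exhibit a Pareto-maximal $v^*$ not on $I_{\varnothing}$. The coordinate-extremal argument of \Cref{lem:PO-1} no longer isolates a point here, since maximizing $v_1$ over $\calF_{k-1}$ gives the whole face $\{b_1=0\}$; instead I pass to a two-dimensional face. Consider the compact slice $S := \calF_{k-1}\cap\{v_3=t_3,\dots,v_k=t_k\}$, corresponding to $b_3=\cdots=b_k=0$, i.e.\ $S = \{(t_1-b_1^2,\,t_2-b_2^2,\,t_3,\dots,t_k) : Q_{11}b_1^2 + 2Q_{12}b_1b_2 + Q_{22}b_2^2 = 1\}$. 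Maximizing $v_1+v_2$ over $S$ is minimizing $b_1^2+b_2^2$ on that ellipse, whose minimizer is the (normalized) top eigenvector of $\left(\begin{smallmatrix}Q_{11}&Q_{12}\\ Q_{12}&Q_{22}\end{smallmatrix}\right)$; because the off-diagonal entry is negative, this eigenvector has components of opposite sign, so $b_1 b_2 < 0$. Any such maximizer $v^*$ is Pareto-maximal within $S$, and since every $v'\in\calF_{k-1}$ dominating $v^*$ must satisfy $v'_j = t_j$ for $j\ge 3$ (because $v'_j \le t_j$ always) and hence lie in $S$, $v^*$ is Pareto-maximal in $\calF_{k-1}$. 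Finally $\sqrt{t-v^*}^\top Q\sqrt{t-v^*} = Q_{11}b_1^2 + 2Q_{12}|b_1 b_2| + Q_{22}b_2^2 = 1 - 4Q_{12}b_1 b_2 < 1$ (using $b_1 b_2 < 0$, $Q_{12}<0$), so $v^*\notin I_{\varnothing}$, whereas the sign pattern flipping task $1$ shows $v^*\in I_1$ --- contradicting that the Pareto front lies on the single surface $I_{\varnothing}$.

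I expect the main obstacle to be exactly this mismatch with the proof of \Cref{lem:PO-1}: the coordinate-extremal points that pin down the ``only if'' argument when $q=1$ become positive-dimensional faces when $q=k-1$, so one needs the detour through a two-dimensional face together with the observation that minimizers over a coordinate slice remain globally Pareto optimal. The remaining ingredients --- the complement reduction, the rescaling argument in the ``if'' direction, and the sign analysis of the $2\times2$ top eigenvector --- are routine, though minor care is needed with degenerate configurations (coordinates with $b_i=0$, or a reducible $Q$), which do not affect the conclusions.
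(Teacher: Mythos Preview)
Your proof is correct, and it shares the paper's overall framework --- reducing $\calF_{k-1}$ to $\calF_1$ via the complement $v\mapsto t-v$, and working with two-coordinate slices for the ``only if'' part --- but the specific arguments in each direction are genuinely different. For the ``if'' direction, the paper simply asserts that when $Q_\varnothing$ is doubly non-negative, $E_\varnothing$ is dominated by all other surfaces in $\calE$ (hence $I_\varnothing$ dominates all of $\calI$); your rescaling argument ($\beta=|b|$, then shrink to land on $E_\varnothing$) is exactly the elementary computation that makes this assertion precise, so your route is more self-contained. For the ``only if'' direction, the paper takes a slightly different tack: it observes that any point of $\calF_1$ supported on two coordinates $i,j$, lying on a surface whose $(i,j)$ entry is positive, corresponds to a PO of $\calF_{k-1}$, and then uses the existence of a row of $Q_\varnothing$ with both a positive and a negative off-diagonal entry to produce two such POs on distinct surfaces $I_\varnothing$ and $I_1$. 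You instead fix the single surface $I_\varnothing$, pick the offending entry $Q_{12}<0$, and extract a PO by maximizing $v_1+v_2$ on the two-dimensional slice, using the sign pattern of the top eigenvector of the $2\times 2$ block to show $b_1b_2<0$ and hence $v^*\notin I_\varnothing$. Your argument proves the sharper per-surface statement (PF $\subset I^*$ iff that particular $Q^*$ is doubly non-negative) rather than just the existential version, and it is more explicit about why the constructed point is actually Pareto optimal; the paper's argument is terser but leaves more to the reader.
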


    \Cref{lem:PO-1,lem:PO-k} 
    provide a refined characterization of how the PO is distributed {within the feasible region}: unless \cone (resp. \ctwo) holds, the Pareto front will belong to multiple 
    surfaces in $\calE$ (resp. $\calI$). This easily leads to the existence of a PO on the intersection of different surfaces (see~\Cref{fig:feasible.surfaces}), which will render the failure of scalarization as a consequence. 
    It is therefore natural for us to conjecture that \cone (resp. \ctwo) is necessary for full exploration. 
    We make the following assumption {regarding} the topology of the Pareto front. 

\begin{assumption} \label{ass:path-connect}
For $q=1$ and $q=k-1$, we assume the Pareto front is path-connected, i.e., for every pair of PO $x$ and $y$, there exists a continuous curve on the Pareto front with $x$ and $y$ being the two endpoints.
\end{assumption}

\Cref{ass:path-connect}, combined with~\Cref{lem:PO-1} (resp.~\Cref{lem:PO-k}), will help locate a PO $P$ on the intersection of two surfaces when \cone (resp. \ctwo) does not hold. 
{We make an additional technical assumption to circumvent degenerated cases. }

\begin{assumption} \label{ass:non-degenerated}
    We assume $P$ (the intersection point) is a relative interior point~\citep[Section 6]{rockafellar1997convex} of the Pareto front.  
\end{assumption}

We are now ready to present the main results in this section (full proofs in~\Cref{adxsubsec:proof-thm-necessary,adxsubsec:proof-thm-sufficient}). 

\begin{theorem}[Necessity of \cone (resp. \ctwo)] \label{thm:necessary}
For $q=1$ (resp. $q=k-1$), under~\Cref{ass:path-connect,ass:non-degenerated},
\cone (resp. \ctwo) is a necessary condition for scalarization to fully explore the Pareto front. 
\end{theorem}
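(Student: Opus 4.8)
The plan is to argue the contrapositive: assuming \cone (resp. \ctwo) fails, I will exhibit a Pareto optimum that scalarization cannot reach, thereby ruling out full exploration. I would carry out the $q=1$ case in detail; the $q=k-1$ case follows by the same argument applied to the surfaces $\calI$ in place of $\calE$, using \Cref{thm:multi-surface-k} and \Cref{lem:PO-k} in place of \Cref{thm:multi-surface-1} and \Cref{lem:PO-1}.

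First I would invoke \Cref{lem:PO-1}: since \cone does not hold, no $G^* \in \calG$ is doubly non-negative, so the Pareto front does not lie on a single surface $E^* \in \calE$. Hence there exist at least two surfaces $E_{i_1\cdots i_l}, E_{j_1\cdots j_m} \in \calE$ each containing a Pareto optimum, say $x$ on the first and $y$ on the second, with $x \neq y$. Next I would use \Cref{ass:path-connect}: the Pareto front is path-connected, so there is a continuous curve $\gamma \colon [0,1] \to \calF_1$ on the Pareto front with $\gamma(0)=x$ and $\gamma(1)=y$. Since the feasible region is the finite union $\bigcup E_{i_1\cdots i_l}$ of closed surfaces (\Cref{thm:multi-surface-1}), and the curve starts on one surface and ends on a different one, a standard connectedness/compactness argument produces a parameter value $t^*$ at which $\gamma(t^*)$ lies on the intersection of (at least) two distinct surfaces of $\calE$ — this is the candidate ``bad'' point $P$, which is a Pareto optimum by construction. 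By \Cref{ass:non-degenerated} we may take $P$ to be a relative interior point of the Pareto front, which ensures that the Pareto front genuinely passes through $P$ along both surfaces rather than terminating there.

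The core of the argument is then the gradient-disagreement step: I must show that scalarization cannot reach $P$. Any scalarization solution with coefficients $\lambda$ maximizes a linear functional $\sum_i \lambda_i v_i$ over the feasible region, so if $P$ were attained by scalarization there would be a supporting hyperplane with outward normal $\lambda$ in the non-negative orthant at $P$. But $P$ lies on two distinct surfaces $E_a, E_b$ of the form $\sqrt{v}^\top Q_a \sqrt{v} = 1$ and $\sqrt{v}^\top Q_b \sqrt{v} = 1$, each of which is locally a smooth hypersurface near $P$ (the relative-interior assumption rules out $P$ being on the boundary/axes where smoothness could fail). Along the Pareto front near $P$, the front coincides with a piece of $E_a$ on one side and a piece of $E_b$ on the other; a supporting hyperplane for the whole feasible region at $P$ must be a supporting hyperplane for each piece, hence its normal must be parallel to the gradient of the defining function of $E_a$ at $P$ \emph{and} to that of $E_b$ at $P$. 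I would then show these two gradients are \emph{not} parallel at a genuine crossing point — this is exactly the ``gradient disagreement'' phenomenon of \Cref{subsec:multi-surface}, and I expect the computation $\nabla(\sqrt{v}^\top Q_a \sqrt{v}) \parallel \nabla(\sqrt{v}^\top Q_b \sqrt{v})$ at $P$ to force $Q_a$ and $Q_b$ to agree on the relevant directions, contradicting $Q_a \neq Q_b$ (they differ by a nontrivial sign-flip reflection). Therefore no valid $\lambda$ exists, $P$ is unreachable by scalarization, and full exploration fails.

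The main obstacle I anticipate is the rigorous handling of the ``two surfaces cross transversally at $P$, so their gradients disagree'' step: a priori two of the ellipsoid-reflection surfaces could be tangent at $P$, or $P$ could lie on more than two surfaces, or the Pareto front could hug a single surface on both sides of $P$ even though a second surface also passes through $P$. Ruling these out is where \Cref{ass:non-degenerated} does the real work — being a relative interior point of the Pareto front forces the front to be genuinely split between the two surfaces near $P$ — together with the algebraic fact that distinct elements of $\calQ$ differ by a sign-flip conjugation $D_{i_1\cdots i_l}$ that is not the identity (nor a global $-I$), which I would leverage to show the gradients cannot be proportional at any point of the open non-negative orthant. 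Making that last implication watertight, and carefully translating between the $v$-coordinates and the $\sqrt{v}$-coordinates used to define the surfaces (the square map is a diffeomorphism on the open orthant, so gradients transform by an invertible diagonal Jacobian and proportionality is preserved), is the delicate part of the proof.
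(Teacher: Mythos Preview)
Your overall strategy matches the paper's: invoke \Cref{lem:PO-1} to get two distinct surfaces carrying Pareto optima, use \Cref{ass:path-connect} to produce an intersection point $P$ on the Pareto front, then argue gradient disagreement at $P$ prevents a supporting hyperplane. The setup and the ``supporting hyperplane must be tangent to both pieces'' reasoning are fine.

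The genuine gap is in your proposed resolution of the tangency obstacle. You correctly flag that the two surfaces might be tangent at $P$, but then propose to close the gap by showing that ``the gradients cannot be proportional at any point of the open non-negative orthant.'' This is false for $k\ge 4$. Parallelism of the gradients at a point with coordinates $v$ reduces (after the diagonal Jacobian of the squaring map) to $Q_a\sqrt{v}=Q_b\sqrt{v}$, i.e.\ $\sqrt{v}\in\ker(Q_a-Q_b)$. Since $Q_b=DQ_aD$, the matrix $Q_a-Q_b$ is block off-diagonal with respect to the partition of indices flipped/unflipped by $D$; when both blocks have size $\ge 2$, the off-diagonal block can be rank-deficient with a kernel meeting the open orthant. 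For instance, with $k=4$, $D=\diag(-1,-1,1,1)$, and $Q$ having $(1,3),(1,4),(2,3),(2,4)$ entries $1,-1,-1,1$ (and large diagonal to stay PD), one gets $(Q_a-Q_b)(1,1,1,1)^\top=0$ with a strictly positive vector. So ``$Q_a\neq Q_b$ forces gradient disagreement everywhere'' cannot be made watertight.

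The paper's fix is different in kind. It does \emph{not} try to rule out tangency at the specific $P$; instead it uses \Cref{ass:non-degenerated} to guarantee that a whole neighborhood of $P$ on the intersection manifold $\calS=E_a\cap E_b$ consists of Pareto optima, and then does a dimension count: $\calS$ is a $(k-2)$-dimensional \emph{nonlinear} manifold (in $\sqrt{v}$-coordinates, the intersection of two quadrics), whereas the tangency locus $\{w:Q_aw=Q_bw\}$ is a linear subspace of dimension at most $k-2$. Intersecting that subspace with the quadric $w^\top Q_a w=1$ drops dimension by one, so the tangency locus meets $\calS$ in a set of dimension $\le k-3$ and cannot cover the local $(k-2)$-patch of $\calS$. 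Hence there is a nearby $P'\in\calS$ that is still Pareto optimal and has disagreeing gradients; that is the point scalarization cannot reach. Your proposal is missing this perturbation/dimension step --- \Cref{ass:non-degenerated} is used to license moving off $P$ within the intersection, not to certify non-tangency at $P$ itself.
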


\textbf{Proof sketch of~\Cref{thm:necessary}.} \quad
\Cref{ass:non-degenerated} allows us to perform  dimensionality analysis in a small neighborhood of the intersection point $P$, from which we can demonstrate the existence of a PO lying on the intersection of two surfaces with gradient disagreement. This point cannot be achieved by linear scalarization.

\begin{remark}
    \Cref{ass:path-connect,ass:non-degenerated} are mainly used to simplify analysis, and they are not necessary for~\Cref{thm:necessary} to hold. As a matter of fact, we can show the necessity of \cone (resp. \ctwo) when $k=3$ without these assumptions. We refer the readers to~\Cref{adxsubsec:proof-thm-necessary-no-assumption} for more details.  

\end{remark}

Finally, without relying on the above assumptions, we can show that \cone and \ctwo are sufficient. 

\begin{theorem}[Sufficiency of \cone (resp. \ctwo)] \label{thm:sufficient}
For $q=1$ (resp. $q=k-1$), \cone (resp. \ctwo) is a sufficient condition for scalarization to fully explore the Pareto front.
\end{theorem}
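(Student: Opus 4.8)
\textbf{Proof plan for \Cref{thm:sufficient}.}\quad
The plan is to reduce the claim to a single operator inequality, proved via the Perron--Frobenius theorem. Since replacing any $\hat y_i$ by $-\hat y_i$ changes neither the objective values $\langle\hat y_i,s\rangle^2$ nor the scalarization objective $\sum_i\lambda_i\langle\hat y_i,s\rangle^2$, we may assume w.l.o.g.\ --- using \cone with \Cref{lem:PO-1} for $q=1$, and \ctwo with \Cref{lem:PO-k} for $q=k-1$ --- that $G:=\hat Y^{\top}\hat Y$ (resp.\ $Q:=G^{-1}$) is doubly non-negative and that the Pareto front lies on the single surface $E_\varnothing$ (resp.\ $I_\varnothing$). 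By the reformulation of \Cref{sec:PF}, for $q=1$ a feasible rank-$1$ projection is $P_Z=ss^{\top}$ with $s$ a unit vector of $V:=\operatorname{span}(\{\hat y_i\})$, and scalarization with weights $\lambda$ selects the \emph{top} eigenvector of $M_\lambda|_V$, where $M_\lambda:=\hat Y\operatorname{diag}(\lambda)\hat Y^{\top}$; for $q=k-1$, since $\calF_{k-1}=t-\calF_1$ (with $t=(\|\hat y_1\|^2,\dots,\|\hat y_k\|^2)^{\top}$ as in \Cref{thm:multi-surface-k}), scalarization selects instead the \emph{bottom} eigenvector of $M_\lambda|_V$. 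Because the Pareto front sits on $E_\varnothing$ (resp.\ $I_\varnothing$), every Pareto optimum has the form $w^{\star}=(u^{\star})^{2}$ (entrywise), $u^{\star}\ge0$, $u^{\star\top}Qu^{\star}=1$ (for $q=k-1$, $v^{\star}=t-w^{\star}$), with witness the unit vector $s^{\star}:=\hat Y Q u^{\star}=\hat Y c^{\star}$, $c^{\star}:=Qu^{\star}$, satisfying $\hat Y^{\top}s^{\star}=u^{\star}$. I treat first the interior case $u^{\star}>0$.

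Set $d:=(Qu^{\star})\oslash u^{\star}$ and take as candidate weights $\lambda^{\star}:=d/S$, $S:=\sum_i d_i$. From $c^{\star}=d\odot u^{\star}$ and $Gc^{\star}=u^{\star}$ one gets $M_{\lambda^{\star}}s^{\star}=\tfrac1S\hat Y(d\odot u^{\star})=\tfrac1S\hat Y c^{\star}=\tfrac1S s^{\star}$, so $s^{\star}$ is already an eigenvector of $M_{\lambda^{\star}}|_V$ with eigenvalue $1/S$; two things remain. \emph{(i) Feasibility of the weights:} $d\ge0$ (and $d>0$ when $q=k-1$). If $d_j<0$ for $q=1$, then for small $\epsilon>0$ the vector $u^{\star}+\epsilon e_j$ satisfies $u^{\top}Qu<1$; rescaling it onto the ellipsoid $\{u:u^{\top}Qu=1\}$ by a factor $>1$ gives a nonnegative vector coordinatewise strictly above $u^{\star}$, hence a point of $E_\varnothing\subseteq\calF_1$ that Pareto-dominates $w^{\star}$, a contradiction. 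For $q=k-1$ the perturbation $u^{\star}-\epsilon e_j$ rescaled by a factor $<1$ yields a point of $\calF_1$ strictly below $w^{\star}$, i.e.\ a point of $\calF_{k-1}$ strictly dominating $v^{\star}$, which rules out $d_j\le0$. Thus $\lambda^{\star}$ lies in the simplex. \emph{(ii) Extremality:} $1/S$ must be the largest (for $q=1$) or smallest (for $q=k-1$) eigenvalue of $M_{\lambda^{\star}}|_V$; since $M_{\lambda^{\star}}$ has the same nonzero spectrum as $\operatorname{diag}(\lambda^{\star})^{1/2}G\operatorname{diag}(\lambda^{\star})^{1/2}$, this is equivalent --- noting $1$ is already an eigenvalue --- to $Q\succeq\operatorname{diag}(d)$ (for $q=1$), resp.\ $Q\preceq\operatorname{diag}(d)$ (for $q=k-1$).

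Step (ii) is the crux. The naive route --- arguing $\sqrt{w}^{\top}Q\sqrt{w}$ is convex on the nonnegative orthant, so that the relevant sublevel region is convex and every Pareto point of $E_\varnothing$ is supported by weights in the simplex --- fails, because \cone only makes $G$ doubly non-negative, which for $k\ge3$ does \emph{not} make $Q$ a Stieltjes matrix (its off-diagonal entries may be positive), so $\sqrt{w}^{\top}Q\sqrt{w}$ need not be convex. Instead, write $D:=\operatorname{diag}(d)$, $D^{1/2}:=\operatorname{diag}(\sqrt d)$; from $c^{\star}=Du^{\star}$ and $Gc^{\star}=u^{\star}$ we get $(D^{1/2}GD^{1/2})(D^{1/2}u^{\star})=D^{1/2}u^{\star}$, so $1$ is an eigenvalue of $A:=D^{1/2}GD^{1/2}$ with eigenvector $D^{1/2}u^{\star}$, which is entrywise positive on the support of $d$ (and $A$ is supported on that index set). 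Since $A$ is entrywise non-negative by \cone, Perron--Frobenius (via the Collatz--Wielandt characterization) forces $\lambda_{\max}(A)=\rho(A)=1$, hence $A\preceq I$, hence --- undoing the congruence by $D^{\pm1/2}$ and using operator-antitonicity of matrix inversion --- $Q\succeq\operatorname{diag}(d)$. The case $q=k-1$ is identical with $A:=D^{-1/2}QD^{-1/2}$ (well-defined as $d>0$ there), which is entrywise non-negative by \ctwo; Perron--Frobenius gives $\lambda_{\max}(A)=1$, hence $Q\preceq\operatorname{diag}(d)$. So in the interior case scalarization with $\lambda^{\star}$ recovers $w^{\star}$ (resp.\ $v^{\star}$), uniquely whenever the relevant Gram-type matrix is irreducible (Perron simplicity).

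A boundary Pareto optimum, with $w^{\star}_i=0$ for $i$ outside some set $T$, reduces to the interior case. For $q=1$ such $i$ satisfy $s^{\star}\perp\hat y_i$, and in fact $s^{\star}\in V_T:=\operatorname{span}(\{\hat y_i:i\in T\})$ (otherwise renormalizing the projection of $s^{\star}$ onto $V_T$ strictly improves every nonzero coordinate); then $w^{\star}|_T$ is Pareto optimal for the width-$1$ problem on the tasks in $T$, which still satisfies \cone because a principal submatrix of a doubly non-negative positive definite matrix is again such, and induction on $|T|$ reaches the interior case, after which the recovered weights are extended by zeros. For $q=k-1$ the perfectly fit tasks ($v^{\star}_i=t_i$) force $s^{\star}$ into the orthogonal complement within $V$ of their predictors, yielding a $|T|$-task width-$(|T|-1)$ problem whose task-Gram-inverse is a principal submatrix of $Q$ (by the Schur-complement identity $(G/G_{\bar T\bar T})^{-1}=(G^{-1})_{TT}$), so \ctwo persists and induction applies. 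I expect the main obstacle to be exactly step (ii): recognizing that under \cone (resp.\ \ctwo) the appropriate diagonal congruence of $G$ (resp.\ $Q$) is an entrywise non-negative matrix with the \emph{explicit} positive eigenvector $D^{1/2}u^{\star}$, so that Perron--Frobenius pins its spectral radius at $1$ and yields the operator inequality; by comparison the perturbation arguments for (i) and the handling of degenerate spectra and boundary faces are routine.
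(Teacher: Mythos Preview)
Your argument is correct, and for $q=1$ it coincides with the paper's proof almost verbatim: both reduce to the operator inequality $Q\succeq\operatorname{diag}(d)$ with $d=(Qu^\star)\oslash u^\star$, both verify it by observing that $D^{1/2}GD^{1/2}$ is entrywise non-negative (by \cone) with positive eigenvector $D^{1/2}u^\star$ at eigenvalue $1$, and both invoke Perron--Frobenius to conclude $\rho(D^{1/2}GD^{1/2})=1$. Your framing via the top eigenvector of $M_\lambda|_V$ and the paper's framing via the tangent plane lying above the feasible region are two views of the same computation; the tangent plane's normal is precisely your $d$. Your treatment of boundary faces by induction on $|T|$ is more explicit than the paper's one-line ``remove the corresponding rows and columns,'' but amounts to the same reduction.

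For $q=k-1$, however, the paper takes a genuinely shorter route. Under \ctwo the matrix $Q$ is doubly non-negative, so
\[
g(u)=\sqrt{u}^{\top}Q\sqrt{u}=\sum_i q_{ii}u_i+2\sum_{i<j}q_{ij}\sqrt{u_iu_j}
\]
is concave on the non-negative orthant (each cross term is a non-negative multiple of the concave $\sqrt{u_iu_j}$). Hence the super-level set $\{u\ge0:g(u)\ge1\}$ is convex, and after the affine change $v=t-u$ the region below $I_\varnothing$ is convex; one application of the supporting hyperplane theorem finishes. Your Perron--Frobenius argument with $A=D^{-1/2}QD^{-1/2}$ is also valid and pleasingly symmetric with the $q=1$ case, but it costs you the extra step of ruling out $d_j\le0$ (which the paper does not need) and the Schur-complement bookkeeping on boundary faces. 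The trade-off: your approach is constructive --- it exhibits explicit weights $\lambda^\star=d/\sum_id_i$ --- whereas the paper's supporting-hyperplane argument is existential but essentially a one-liner once concavity is noted.
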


\textbf{Proof sketch of~\Cref{thm:sufficient}.}\quad For $q=1$, we consider the tangent plane at a PO and attempt to show it lies above the feasible region. This is transformed into demonstrating the positive semi-definiteness of a matrix, which can be conquered by applying the Perron-Frobenius theorem~\citep[Section 1.3]{shaked2021copositive} on non-negative matrices. For $q=k-1$, we show the boundary of the feasible region is concave, and apply the supporting hyperplane theorem~\citep[Section 2.5.2]{boyd2004convex} to {conclude} the proof.

\vspace{-1mm}
\section{Discussion} \label{sec:implications}
\vspace{-1mm}
We discuss the implications of our theoretical results, and provide a remedy solution for linear scalarization to address its inability of full exploration.

\subsection{Implications of the theoretical results}
\label{subsec:implications}

\textbf{Fundamental limitation of linear scalarization.}\quad
From~\Cref{thm:necessary}, it is clear that linear scalarization faces significant challenges in fully exploring the Pareto front in the case of $q=1$ and $q=k-1$,
since both \cone and \ctwo require the existence of a doubly non-negative matrix, which is a very small subset of the PSD matrix. 
To illustrate this point, we consider a simplified probabilistic model:
when the optimal predictors $\{\hat y_i\}_{i\in [k]}$ are drawn independently from a symmetric distribution centered at the origin, the probability that \cone holds is $2^{-O(k^2)}$, which decays exponentially with the number of tasks. 
Thus, we comment that \cone and \ctwo in general does not hold in practical cases. 

    \textbf{Towards understanding when and why scalarization fails.} \quad 
    Among the vast literature that studies Pareto optimality in MTL,
    there is a noticeable absence of a systematic and rigorous exploration of \textit{when and why scalarization fails to reach a Pareto optimum}. Prior work typically resorts to hypothetical examples (e.g., Figure 4.9 in~\citep{boyd2004convex}) to demonstrate the failure of scalarization. Our study, which identifies the multi-surface structure of the feasible region and reveals the phenomenon of gradient disagreement in a concrete setting, serves as a pioneering step in addressing this gap in the literature.

\textbf{A closer examination of the failure mode.}\quad
For the extreme under-parametrized case ($q=1$), on the one hand, we can prove that scalarization is able to achieve points which attain maximum value along one direction (i.e., best performance on one task). On the other hand, by examining~\Cref{fig:feasible.surfaces}, 
we shall see that scalarization is incapable of exploring the interior points (i.e., balanced performance across all tasks). This suggests that scalarization has the tendency to \textit{overfit} to a small fraction of tasks, and fails to strike the right balance between different tasks. We provide more empirical evidence to support this argument in~\Cref{sec:exp}.

\textbf{Criteria for practitioners.} \quad
\Cref{thm:sufficient} demonstrates that \cone and \ctwo are 
sufficient for scalarization to fully explore the Pareto front. As such, 
{these conditions can serve as valuable criteria for practitioners when deciding which tasks to combine in linear MTL.}
Note a naive enumeration of $\calG$ or $\calQ$ will lead to a time complexity of $O(2^k)$. We instead provide an $O(k^2)$ algorithm checking whether \cone holds (\Cref{algorithm:check_C1} in~\Cref{adxsex:algo-c1}). 
In essence, the algorithm aims to find a set of flipping signs $\{s_i\}_{i \in [k]} \in \{+, -\}$ 
that ensure positive or zero correlation between all pairs in $\{s_i\hat y_i\}_{i \in [k]}$.
It scans through the predictors in ascending order and either determines the signs $s_j$ for each optimal predictor $\hat y_j$ if not determined yet, or checks for conflicts between signs that are already determined. 
A sign can be determined when the predictors $\hat y_i$ and $\hat y_j$ are correlated. 
A conflict indicates the impossibility of finding such a set of flipping signs, thus falsifying \cone. 
A similar algorithm can be devised for \ctwo with an additional step of Cholesky decomposition beforehand, leading to an $O(k^3)$ time complexity.   





\textbf{Open questions in~\citet{xin2022current}.}\quad 
We (partially) answer the open questions  in~\citet{xin2022current}. 

\texttt{Q1}. ``Is it the case that neural networks trained via a combination of scalarization and standard first-order optimization methods are not able to reach the Pareto Frontier?''

\texttt{A1}. For Linear MTL, our answer is a  {definitive} ``Yes''. We demonstrate that linear scalarization with arbitrary convex coefficients 
 cannot trace out all the Pareto optima in general. Our results are strong in the sense that they hold on a \textit{representation} level, meaning that they are independent of the specific optimization algorithm employed.
{This suggests that the weakness we have revealed in scalarization is fundamental in nature.}

\texttt{Q2}. ``Is non-convexity adding additional complexity which makes scalarization insufficient
for tracing out the Pareto front?''

\texttt{A2}. Yes, even some mild non-convexity {introduced} by two-layer linear networks will {render} scalarization insufficient. On the other hand, we do believe  the multi-surface structure that we have  {unveiled} in~\Cref{subsec:multi-surface} is
both universal and fundamental, which also contributes to the observed non-convexity in practical scenarios. 
{In essence,}
what we have done partly is to penetrate through the surface and gain a holistic view of the feasible region. 
{Notably, the key obstacle in linear scalarization, as revealed by our analysis, lies in the phenomenon of \textit{gradient disagreement}. This cannot be readily discerned by solely examining the surface.}

We make two additional comments to further clarify our standing point in comparison to this work. 
\begin{enumerate}
    \item We do not aim to refute the claims in~\citet{xin2022current}; as a matter of fact, we consider their empirical findings highly valuable. Instead, we would like to emphasize a discrepancy between their theory and experiments. Given that we have shown the conclusion---that linear scalarization can fully explore the PF---no long holds in the non-convex setting, their theory does not account for their experiments on modern neural networks. Therefore, we urge the research community to develop a new theory elucidating the empirical efficacy of linear scalarization.

    \item While the differing settings render the results from these two papers non-comparable, they collectively offer a broader perspective on the pros and cons of scalarization and SMTOs. We believe this is of great significance to both researchers and practitioners.
 
\end{enumerate}

\subsection{A remedy solution}
\label{subsec:remedy}
Given the fundamental weakness of linear scalarization in fully exploring the Pareto front, one might wonder whether there exists any model class where linear scalarization is sufficient to fully explore the Pareto front. The answer is affirmative if we allow ourselves to use \emph{randomized multi-task neural networks}. More specifically, given two MTL networks $f^{(0)}$ and $f^{(1)}$ over the same input and output spaces and a fixed constant $t \in [0, 1]$, we can construct a randomized MTL network as follows. Let $S\sim U(0, 1)$ be a uniform random variable over $(0, 1)$ that is independent of all the other random variables, such as the inputs. Then, consider the following randomized neural network:
\begin{equation}
    f(x) \defeq \begin{cases}
        f^{(0)}(x) & \mathrm{if~} S \leq t \\
        f^{(1)}(x) & \mathrm{otherwise}
    \end{cases}
\label{equ:random}
\end{equation}
Essentially, the randomized network constructed by~\Cref{equ:random} will output $f^{(0)}(x)$ with probability $t$ and $f^{(1)}(x)$ with probability $1 -t$. Now if we consider the loss of $f(\cdot)$, we have
\begin{align} \label{eq:randomization}
    \E_\mu \E_S[\|f(X) - Y\|_2^2] &= \E_S\E_\mu [\|f(X) - Y\|_2^2] \notag \\
    &= t \E_\mu[\|f(X) - Y\|_2^2\mid S \leq t ] + (1-t) \E_\mu[\|f(X) - Y\|_2^2\mid S > t ] \notag \\
    &= t \E_\mu[\|f^{(0)}(X) - Y\|_2^2] + (1-t) \E_\mu[\|f^{(1)}(X) - Y\|_2^2],
\end{align}
    which means that the loss of the randomized multi-task network is a convex combination of the losses from $f^{(0)}(\cdot)$ and $f^{(1)}(\cdot)$. Geometrically, the argument above shows that randomization allows us to ``convexify'' the feasible region and thus the Pareto front will be convex. Now, by a standard application of the supporting hyperplane theorem~\citep[Section 2.5.2]{boyd2004convex} we can see that every point on the Pareto front can be realized via linear scalarization. We empirically verify the effectiveness of randomization in~\Cref{adxsubsec:randomization}.

    We comment that randomization is a powerful tool that has wide applicability in machine learning.  For instance, it has achieved great success in online learning, specifically in the Follow-the-Perturbed-Leader algorithm~\citep{kalai2005efficient}. It also appears in the literature of fairness to achieve the optimal regressor or classifier~\citep{zhao2022inherent,xian2023fair}. Here we further demonstrate that it can be applied to MTL to facilitate the exploration of linear scalarization. In essence, randomization increases the representation power of the original neural network.

\section{Experiments}
\label{sec:exp}

In this section, we corroborate our theoretical findings with a linear MTL experiment, where we show that linear scalarization fails to fully explore the Pareto front on a three-task problem that does not satisfy \cone with $q=1$. 
{In comparison}, SMTOs can achieve balanced Pareto optimum solutions that are not achievable by scalarization. 

\textbf{Dataset and setup.}\quad
We use the SARCOS dataset for our experiment~\citep{vijayakumar2000locally}, where the problem is to predict the torque of seven robot arms given inputs that consist of the position, velocity, and acceleration of the respective arms.  For ease of visualization, we restrict the tasks to arms 3, 4, and 5; in particular, they do not satisfy \cone.

Our regression model is a two-layer linear network with hidden size $q=1$ (no bias). To explore the portion of the Pareto front achievable by linear scalarization, we fit 100,000 linear regressors with randomly sampled convex coefficients and record their performance. These are compared to the solutions found by SMTOs, for which we use MGDA~\citep{desideri2012multiple} and MGDA-UB~\citep{sener2018multi}.
Specifically, to guarantee their Pareto optimality, we run the SMTOs with full-batch gradient descent till convergence (hence each SMTO method has one deterministic solution). 
Additional details, including hyperparameter settings, can be found in~\Cref{adxsubsec:details}.

\begin{wrapfigure}{r}{0.45\textwidth}
\vspace{-1\intextsep}
  \includegraphics[width=\linewidth]{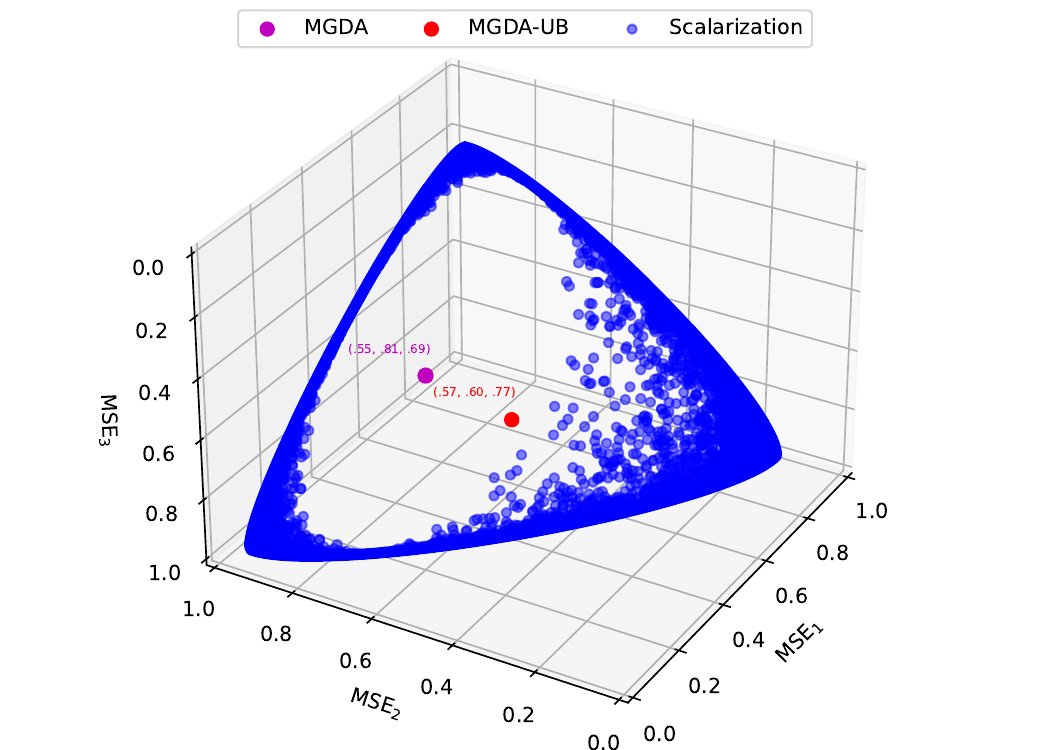}
  \caption{Linear scalarization has the tendency to overfit to a subset of tasks by finding \textit{peripheral} blue points of the Pareto front. MGDA (magenta) and MGDA-UB (red) converge to more balanced PO in the \textit{interior}. 
  }\label{fig:exp}
\vspace*{-1em}
\end{wrapfigure}

\textbf{Results.}\quad
In \cref{fig:exp}, we plot the MSEs achieved by scalarization under randomly sampled weights.  We observe that the solutions achieved by scalarization are concentrated near the vertices and partly along the edges, indicating their tendency to overfit to 1--2 of the tasks.  Scalarization also fails to explore a large portion of the Pareto front in the center---which have more balanced performance across task; the inability of full exploration corroborates the necessary condition {(\Cref{thm:necessary})} discussed in \cref{sec:PF}.  In contrast, SMTOs are capable of finding solutions not achievable by scalarization that are located in the center of the Pareto front. This showcases the relative advantage and practical utility of SMTOs in MTL over scalarization, especially in scenarios where equitable performance on all tasks are desired.
To strengthen our results, we experiment with multiple random initializations and observe consistent results (see~\Cref{adxsubsec:init}).

\textbf{Discussion.}\quad We would like to note that we are not endorsing any specific SMTO algorithm, such as MGDA or MGDA-UB. As a matter of fact, we hypothesize that no single algorithm is universally effective (also see~\citep{chen2023three}), notably in overcoming the limitation of scalarization in terms of full exploration.
Instead, by revealing the limitation of scalarization and the potential benefits of \textit{some} SMTO algorithms, we aim to strengthen SMTO research, challenge the notion that linear scalarization is sufficient, and advocate for a more balanced progression in the field of MTL.







\vspace{-1mm}
\section{Limitations and future directions} \label{sec:limitations}
\vspace{-1mm}


\textbf{Limitations.}\quad
There are two major limitations of our work. First, while the techniques we develop in this paper are highly non-trivial, we only cover the two extreme cases $q=1$ and $q=k-1$ in the under-parametrized regime, and the analysis for general $q<k$ is missing. Second, the setting of our study is relatively restricted. Specifically, we focus on linear MTL where different tasks share the same input. A more realistic setting should 
factor in different inputs for different tasks~\citep{Wu2020Understanding}, as well as some non-linearity in the model architecture~\citep{kurin2022defense}. 

\textbf{Future directions.}\quad
We expect the geometry of the feasible region to transition smoothly as we increase $q$, meaning that it will not abruptly collapse into a \textit{single} surface. Providing a rigorous basis for this assertion would facilitate the analysis for general $q$'s, thereby complementing our study. 
Regarding the assumption of shared input, while our analysis persists under the `proportional covariance assumption'
adopted by~\citet{Wu2020Understanding},
we find this assumption excessively restrictive, and hope future work could relax it further. 
The generalization to neural networks with ReLU activation is interesting but also challenging, and it is unclear whether the techniques and results developed in this paper can be extended to the non-linear setting. 
Another interesting avenue to explore is interpreting the multi-surface structure we have uncovered, which we hypothesize to have connections with game theory.
Finally, our paper dedicates to demonstrating the weaknesses of linear scalarization, while we consider it an important future direction to theoretically analyze the advantage of SMTOs, as partially revealed in our experiments. 
In all, we hope our study can initiate a line of works on MTL, which provide theoretical justifications for the usage of specific algorithms beyond empirical evaluations across various settings.

\vspace{-1mm}
\section{Related work} \label{sec:related}
\vspace{-1mm}

\looseness=-1

\textbf{SMTOs.}\quad There is a surge of interest in developing specialized multi-task optimizers (SMTOs) in the literature. One of the earliest and most notable SMTOs is the multiple-gradient descent algorithm (MGDA)~\citep{desideri2012multiple}. For each task, MGDA solves an optimization problem and performs updates using the gradient of the objective function with both shared and task-specific parameters. However, MGDA cannot scale to high dimension (e.g., neural networks), and computing the gradients for every task is prohibitively expensive. To counter these drawbacks,~\citet{sener2018multi} introduce a scalable Frank-Wolfe-based optimizer 
which is more efficient. Another prominent issue in MTL is that the gradients from multiple tasks could conflict with each other and the performance of some tasks could be significantly affected. To this end, \citet{yu2020gradient} propose \textit{gradient surgery} to mitigate the negative interference among tasks; \citet{chen2020just} and~\citet{liu2021conflict} propose to manipulate the gradients or the gradient selection procedures; \citet{fernando2022mitigating} devise a stochastic gradient correction method and prove its convergence. More recently, \citet{navon2022multi} cast the gradient update as a Nash bargaining game, yielding well-balanced solutions across the PF.

\textbf{Empirical comparison of SMTOs and scalarization.} \quad A recent line of works empirically compare SMTOs and linear scalarization. Their central claim is that the performance of linear scalarization is comparable with the state-of-the-art SMTOs despite the added complexity of SMTOs.
\citet{xin2022current} argue that multi-task optimization methods do not yield any improvement beyond what can be achieved by linear scalarization with carefully-tuned hyperparameters; \citet{kurin2022defense} show that the performance of linear scalarization matches or even surpasses the more complex SMTOs
when combined with standard regularization techniques.
In addition to classification tasks,~\citet{vandenhende2021multi} conduct extensive experiments on dense prediction tasks, i.e., tasks that produce pixel-level predictions. 
They conclude that avoiding gradient competition among tasks can actually lead to performance degradation, and linear scalarization with fixed weights outperforms some SMTOs.


\textbf{Exploration of the Pareto front.} \quad 
It is well-known that minimizing scalarization objectives with positive coefficients yields Pareto optimal solutions~\citep{boyd2004convex}, while using non-negative coefficients yields weak Pareto optimal ones~\citep{zadeh1963optimality}. On the other hand, exploration of the \textit{entire} Pareto front has been a long-standing and difficult open problem in MTL. For the simplest case, if all loss functions are convex or the achievable region is a convex set, then 
scalarization can fully explore the PF
\citep{boyd2004convex,xin2022current}. A slightly weaker condition for full exploration is directional convexity~\citep{holtzman1966discretional}, established by \citet{lin1976three}.
Several recent works in MTL develop algorithms to generate PO subject to user preferences, in particular on trade-offs among certain tasks~\citep{lin2019pareto, mahapatra2020multi, momma2022multi}. There is also a line of empirical works that focuses on designing algorithms to efficiently explore the PF~\citep{ma2020efficient, lin2020controllable, liu2021profiling, navon2021learning, ruchte2021scalable, ye2022pareto}. 
As far as we know, we are the first to establish both necessary and sufficient conditions for full exploration in the non-convex setting.

\vspace{-1mm}
\section{Conclusion} \label{sec:conclusion}
\vspace{-1mm}

In this paper, we revisit linear scalarization from a theoretical perspective. In contrast to recent works that claimed consistent empirical advantages of scalarization, we show its inherent limitation in fully exploring the Pareto front. Specifically, we reveal a {multi-surface} structure of the feasible region, and identify necessary and sufficient conditions for full exploration in the under-parametrized regime. We also discuss several implications of our theoretical results. Experiments on a real-world dataset corroborate our theoretical findings, and suggest the benefit of SMTOs in finding balanced solutions.  

\section*{Acknowledgement}
We thank the anonymous reviewers for their constructive feedback. Yuzheng Hu would like to thank Fan Wu for useful suggestions in writing. Han Zhao's work was partly supported by the Defense Advanced Research Projects Agency (DARPA) under Cooperative Agreement Number: HR00112320012, an IBM-IL Discovery Accelerator Institute research award, a Facebook Research Award, a research grant from the Amazon-Illinois Center on AI for Interactive Conversational Experiences, and Amazon AWS Cloud Credit.

\bibliographystyle{abbrvnat}
\bibliography{references}


\newpage
\appendix

\section{Omitted details from \texorpdfstring{\Cref{subsec:multi-surface}}{Section 3.1}}
\label{adxsec:proof-multi-surface}

\subsection{Implementation of \texorpdfstring{\Cref{fig:feasible.surfaces}}{Figure 1}}
\label{adxsubsec:implementation}
\Cref{fig:feasible.surfaces} is generated from a simple three-task linear MTL problem that we constructed, using~\Cref{eq:max_moo}. Specifically, we set $\hat y_1\approx(0.98,0,0.2), \hat y_2\approx(-0.49,-0.85,0.2), \hat y_3\approx(-0.49,0.85,0.2)$ (the number of data points is $n=3$; this is a rotated version of the equiangular tight frame), set $q=1$ (the width of the network is one, i.e., under-parameterized), and plotted the achievable points of~\Cref{eq:max_moo} by sweeping $P_Z$ (the set of rank-1 projection matrices). The software we used is Mathematica.

\subsection{Over-parametrization reduces the Pareto front to a singleton}
\label{adxsubsec:singleton}

For general non-linear multi-task neural networks, we will show that increasing the width reduces the Pareto front to a singleton, where all tasks simultaneously achieve zero training loss. As a consequence, linear scalarization with any convex coefficients will be able to achieve this point. 

To illustrate this point, we follow the same setting as in~\Cref{sec:prelim}, except changing the model to be a two-layer ReLU multi-task network with bias terms. Concretely, for task $i$, the prediction on input $x$ is given by $f_i(x, W, b, a_i) = a_i^{\top}\max(Wx+b, 0)$. We have the following result. 

\begin{theorem}
     There exists a two-layer ReLU multi-task network with hidden width $q=nk$ and parameters $(W, b, a_1, \cdots, a_k)$, such that 
    \begin{align}
        f_i(x_i^{j}, W, b, a_i) = y_i^j, \quad \forall j \in [n], \ \forall i \in [k]. 
    \end{align}
    This implies that the network achieves zero training loss on each task.  
\end{theorem}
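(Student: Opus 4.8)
The plan is to prove this as a universal interpolation (memorization) statement for two-layer ReLU networks, using one dedicated block of $n$ hidden neurons per task. First I would reduce the multi-task claim to a single-task interpolation lemma: given $n$ distinct points $x^1,\dots,x^n\in\RR^p$ and arbitrary scalar targets $y^1,\dots,y^n\in\RR$, there exist $w\in\RR^p$, biases $t_1,\dots,t_n\in\RR$, and coefficients $a\in\RR^n$ with $\sum_{l=1}^n a_l\max(w^\top x^m-t_l,0)=y^m$ for all $m\in[n]$. Granting this lemma, the multi-task network is assembled by stacking $k$ such blocks.

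To prove the lemma, I would first pick a direction $w\in\RR^p$ for which the scalars $w^\top x^1,\dots,w^\top x^n$ are pairwise distinct; such $w$ exists because the set of ``bad'' directions is the finite union of hyperplanes $\{w:w^\top(x^m-x^{m'})=0\}$ over $m\neq m'$, which has empty interior. After relabeling so that $w^\top x^1<\dots<w^\top x^n$, choose $t_1<w^\top x^1$ and $t_l\in(w^\top x^{l-1},w^\top x^l)$ for $l\ge 2$. Then the matrix $H\in\RR^{n\times n}$ with entries $H_{ml}=\max(w^\top x^m-t_l,0)$ is lower triangular (indeed $H_{ml}=0$ whenever $m<l$, since then $w^\top x^m\le w^\top x^{l-1}<t_l$) with strictly positive diagonal $H_{ll}=w^\top x^l-t_l>0$, hence invertible. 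Setting $a=H^{-1}(y^1,\dots,y^n)^\top$ yields $\sum_l a_l H_{ml}=(Ha)_m=y^m$, proving the lemma.

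Next I would assemble the network. For each task $i\in[k]$ apply the lemma to the points $\{x_i^j\}_{j\in[n]}$ and targets $\{y_i^j\}_{j\in[n]}$, obtaining $w_i\in\RR^p$, thresholds $t_{i,1},\dots,t_{i,n}$, and coefficients $a^{(i)}\in\RR^n$. Build $W\in\RR^{nk\times p}$ and $b\in\RR^{nk}$ from $k$ blocks, the $i$-th block having rows equal to $w_i^\top$ and bias entries equal to $-t_{i,l}$, so that the shared hidden map $h(x)=\max(Wx+b,0)\in\RR^{nk}$ has its $i$-th block of coordinates compute exactly the hinge features of the lemma for task $i$. Define $a_i\in\RR^{nk}$ to equal $a^{(i)}$ on the $i$-th block of coordinates and $0$ elsewhere. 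Then $f_i(x_i^j,W,b,a_i)=a_i^\top h(x_i^j)=\sum_{l=1}^n a^{(i)}_l\max(w_i^\top x_i^j-t_{i,l},0)=y_i^j$ for every $j\in[n]$, which is the claim, and the MSE training loss on each task is zero since every residual vanishes.

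The argument is essentially routine, so there is no serious obstacle; the only points that need care are (i) the existence of a separating direction $w_i$, handled by the genericity observation above, and (ii) the fact that the hidden layer is shared across tasks, which is exactly why the block structure is convenient — the head $a_i$ simply ignores the neurons assigned to other tasks. I would also note an implicit nondegeneracy hypothesis, namely that for each task the $n$ inputs are distinct (otherwise interpolating conflicting labels is impossible), which is consistent with the continuous i.i.d.\ sampling model of \Cref{sec:prelim}; and I would remark that $q=nk$ is sufficient but not tight — when inputs are shared, $x_i^j=x_j$, a single common block of $n$ hinge neurons already interpolates all tasks via $a_i=H^{-1}y_i$ — but using $nk$ keeps the construction uniform across tasks.
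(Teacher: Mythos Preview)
Your proposal is correct and follows essentially the same approach as the paper: reduce to a single-task interpolation result for a width-$n$ two-layer ReLU network, then stack $k$ such blocks with the head $a_i$ supported only on the $i$-th block (the paper writes this as $a_i = e_i\otimes\tilde a_i$). The only difference is that the paper cites the single-task interpolation result (Theorem~1 of Zhang et al., 2021) as a black box, whereas you supply its proof inline via the generic-direction-plus-interleaved-thresholds argument; your version is therefore more self-contained but structurally identical.
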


\begin{proof}
    For every $i \in [k]$, we can apply Theorem 1 in~\citep{zhang2021understanding} and find $(W_i, b_i, \tilde{a}_i)$ such that 
    \begin{align}
        \tilde{a}_i^{\top}\max(W_i x_i^j + b_i, 0) = y_i^j, \quad \forall j \in [n].
    \end{align}
    Here, $W_i \in \mathbb{R}^{n \times p}$ and $b_i, \tilde{a}_i \in \mathbb{R}^{n}$. Now consider
    \begin{align}
        W = \begin{pmatrix}
        W_1 \\
        W_2 \\
        \vdots \\
        W_n
        \end{pmatrix}\in \mathbb{R}^{nk \times p}, \quad b = \begin{pmatrix}
        b_1 \\
        b_2 \\
        \vdots \\
        b_n
        \end{pmatrix} \in\mathbb{R}^{nk}, \quad a_i = e_i \otimes \tilde{a}_i \in\mathbb{R}^{nk} \quad \forall i \in [n],
    \end{align}
where $e_i$ denotes the $i$-th unit vector in $\mathbb{R}^n$ and $\otimes$ stands for the Kronecker product. It is straightforward to see that 
\begin{align}
    {a}_i^{\top}\max(W_i x_i^j + b_i, 0) = y_i^j, \quad \forall j \in [n], \ \forall i \in [k].
\end{align}
This finishes the proof as desired.
\end{proof}

\subsection{Proof of \texorpdfstring{\Cref{thm:multi-surface-1}}{Theorem 3.1}}
\label{adxsubsec:proof-thm-multi-surface-1}

\begin{proof}[Proof of \texorpdfstring{\Cref{thm:multi-surface-1}}]
    Define $v_i = \langle \hat y_i, s \rangle$ and $v = (v_1, \cdots, v_k)^{\top}$. We are interested in the set
    \begin{equation}
        S := \left\{v \mid \|s\|=1, s \in\operatorname{span}(\{\hat y_i\}_{i\in[k]})\right\}.
    \end{equation}
    We will show $S$ is equivalent to the following set 
    \begin{equation}
        B := \left\{v \mid v^{\top}Qv=1\right\}, \quad Q = (\hat{Y}^{\top} \hat{Y})^{-1},
    \end{equation}
    which is essentially the boundary of an ellipsoid.

    \textbf{Step 1: $S \subset B$.} \quad Note $v = \hat Y^{\top} s$, so it suffices to show 
    \begin{equation}
        s^{\top}\hat Y(\hat{Y}^{\top} \hat{Y})^{-1}\hat Y^{\top} s=1.
    \end{equation}
    Denote $P_{\hat Y} = \hat Y(\hat{Y}^{\top} \hat{Y})^{-1}\hat Y^{\top}$, which is a projection matrix that maps to the column space of $\hat Y$. Since $s \in \operatorname{span}(\{\hat y_i\}_{i\in[k]})$, we have
    \begin{equation}
        s^{\top}P_{\hat Y}s = s^{\top}s = \|s\|^2=1.  \end{equation}

    \textbf{Step 2: $B \subset S$.} \quad Since $\hat Y^{\top}$ has full row rank, for every $v \in \mathbb{R}^k$, we can always find $s \in \mathbb{R}^n$ such that 
    $\hat Y^{\top} s = v$. We can further assume $s \in \operatorname{span}(\{\hat y_i\}_{i\in[k]})$ by removing the component in the null space of $\hat Y^{\top}$. Assuming $v\in B$, we have
    \begin{equation}
        s^{\top}P_{\hat Y}s =1, \quad s\in \operatorname{span}(\{\hat y_i\}_{i\in[k]}).
    \end{equation}
    Since $P_{\hat Y}$ is a projection matrix that maps to the column space of $\hat Y$, we have $s^{\top}s=1$, which implies $\|s\|=1$.

    Now we have $S=B$. The set
    \begin{equation}
        S':=\left\{|v| \mid \|s\|=1, s \in\operatorname{span}(\{\hat y_i\}_{i\in[k]})\right\}
    \end{equation}
    can be obtained by reflecting $E$ to the non-negative orthant. Formally, a reflection is determined by a collection of axes $\{i_1, \cdots, i_l\}$ ($0\le l\le k$). The image of such reflection in the non-negative orthant is given by
    \begin{equation}
        B_{i_1\cdots i_l} = \{v\mid v^{\top}Q_{i_1\cdots i_l}v=1, v\ge 0\},
    \end{equation}
    where $Q_{i_1,\cdots i_l}= D_{i_1\cdots i_l}QD_{i_1\cdots i_l}$. 
    As a consequence, we have
    \begin{equation}
        S' = \bigcup_{0 \le l \le k} \bigcup_{1\le i_1 < \cdots < i_l \le k} B_{i_1\cdots i_l}.
    \end{equation}
    Finally, the feasible region can be equivalently characterized as
    \begin{equation}
        \mathcal{F}_1 = \left\{v^2 \mid \|s\|=1, s \in\operatorname{span}(\{\hat y_i\}_{i\in[k]})\right\},
    \end{equation}
    which can be obtained by squaring the coordinates of $S'$. Therefore, if we denote
    \begin{equation}
        E_{i_1\cdots i_l} = \{{v}\mid \sqrt{v}^{\top}Q_{i_1\cdots i_l}\sqrt{v}=1\},
    \end{equation}
    then (note the square root naturally implies $v\ge 0$)
    \begin{equation}
        \calF_1 = \bigcup_{0 \le l \le k} \bigcup_{1\le i_1 < \cdots < i_l \le k} E_{i_1\cdots i_l}.
    \end{equation}
    This finishes the proof as desired. 
\end{proof}

\subsection{Proof of \texorpdfstring{\Cref{thm:multi-surface-k}}{Theorem 3.2}}
\label{adxsubsec:proof-thm-multi-surface-k}

\begin{proof}[Proof of \texorpdfstring{\Cref{thm:multi-surface-k}}]
Note the orthogonal complement of a $1$-dimensional subspace of $\operatorname{span}(\{\hat y_i\}_{i\in[k]})$ is a $(k-1)$-dimensional subspace, and that the objective function for each task is the square of the projection of $\hat y_i$. Denote $t:= (\|\hat y_1\|^2, \cdots, \|\hat y_k\|^2)^{\top}$. By the Pythagorean theorem~\citep[Section I.6]{bhatia2013matrix}, if $v \in \mathcal{F}_1$, then we must have $t-v \in \mathcal{F}_{q-1}$, and vice versa. As a consequence, denote
\begin{equation}
    I_{i_1\cdots i_l} = \{{v}\mid \sqrt{t-v}^{\top}Q_{i_1\cdots i_l}\sqrt{t-v}=1\},
\end{equation}
and we have
\begin{equation}
    \calF_{q-1} = \bigcup_{0 \le l \le k} \bigcup_{1\le i_1 < \cdots < i_l \le k} I_{i_1\cdots i_l}
\end{equation}
by~\Cref{thm:multi-surface-1}.
\end{proof}

\subsection{Why scalarization fails in the presence of gradient disagreement}
\label{adxsubsec:explanation}
We explain why linear scalarization is incapable of exploring an intersection point with gradient disagreement. 

We first recall the geometric interpretation (see Figure 4.9 in~\citep{boyd2004convex}) of linear scalarization: a point $P$ lying at the boundary of the feasible region can be achieved by scalarization, if and only if there exists a hyperplane at $P$ and the feasible region lies above the hyperplane. The normal vector of the hyperplane is proportional to the scalarization coefficients. By definition, if a hyperplane at $P$ lies below the feasible region, its normal vector must be a subgradient of the surface. When the boundary of the feasible region is differentiable and the subdifferential set is non-empty, the normal vector must be the gradient of the surface, and the hyperplane becomes the tangent plane at $P$. 

Now suppose $P$ lies at the intersection of two differentiable surfaces $S_1$ and $S_2$, and that $P$ can be achieved by scalarization. Applying the above argument to $S_1$ and $P$, we know that the scalarization coefficients are proportional to the gradient w.r.t. $S_1$. Similarly, applying the above argument to $S_2$ and $P$ yields that the scalarization coefficients are proportional to the gradient w.r.t. $S_2$. This will result in a contradiction if the two gradients w.r.t. $S_1$ and $S_2$ at $P$ lie in different directions, a phenomenon we refer to as \textit{gradient disagreement}. In this case, scalarization cannot reach $P$.

\section{Omitted details from \texorpdfstring{\Cref{subsec:exploration}}{Section 3.2}}
\label{adxsec:proof-exploration}

\subsection{Proof of \texorpdfstring{\Cref{lem:PO-1}}{Lemma 3.4}}
\label{adxsubsec:proof-lem-PO-1}

We begin by reviewing some basics of convex analysis. We refer the readers to~\citet{rockafellar1997convex} for more background knowledge. 

\begin{definition}[Convex hull]
Let $X = \{x_i\}_{i\in [m]}$ be a collection of points in $\mathbb{R}^n$. The \textbf{convex hull} of $X$, denoted as $\operatorname{conv}(X)$, is the set of all convex combinations of points in $X$, i.e.,
\begin{equation}
    \operatorname{conv}(X) := \left\{\sum_{i\in [m]} \alpha_i x_i \mid \alpha_i \ge 0, \sum_{i\in [m]} \alpha_i=1\right\}.
\end{equation} 
\end{definition}

\begin{definition}[Cone and convex cone]
    A set $K \subset \mathbb{R}^n$ is called a \textbf{cone} if  $x \in K$ implies $\alpha x \in K$ for all $\alpha >0$. A \textbf{convex cone} is a cone that is closed under addition. The convex cone generated by $X = \{x_i\}_{i\in [m]} \subset \mathbb{R}^n$, denoted as $\operatorname{cone}(X)$, is given by
    \begin{equation}
        \operatorname{cone}(X) := \operatorname{conv}(\operatorname{ray}(X)),
    \end{equation}
    where $\operatorname{ray}(X) = \{\lambda x_i \mid \lambda \ge 0, i\in[m]\}$. 
\end{definition}

\begin{definition}[Dual cone]
    The \textbf{dual cone} of $C \subset \mathbb{R}^n$, denoted as $C^*$, is given by
    \begin{equation}
        C^* := \{y \mid \langle y,x \rangle \ge 0, \ \forall x \in C\}
    \end{equation}
\end{definition}

\begin{definition}[Projection onto a convex set]
    Let $C$ be a closed convex set in $\mathbb{R}^n$. The \textbf{orthogonal projection} of $y \in \mathbb{R}^n$
 onto the set $C$, denoted as $P_C y$, is the solution to the optimization problem
 \begin{equation}
     P_C y := \inf_{x\in C} \{\|x-y\|\}.
 \end{equation}
\end{definition}

\begin{theorem}[Bourbaki-Cheney-Goldstein inequality~\citep{ciarlet2013linear}] \label{thm:BCG}
    Let $C$ be a closed convex set in $\mathbb{R}^n$. For any $x \in \mathbb{R}^n$, we have
    \begin{equation}
        \langle x-P_C x, y-P_C x\rangle \le 0, \quad \forall y \in C.
    \end{equation}
\end{theorem}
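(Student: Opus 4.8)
The plan is to recognize \Cref{thm:BCG} as the first-order (variational) characterization of the metric projection onto a closed convex set, and to derive it from the minimality of $P_C x$ via a one-parameter perturbation along a segment inside $C$. First I would ensure the projection is well-defined: the problem $\inf_{z \in C}\|z - x\|$ minimizes a continuous, coercive function over a nonempty closed set, so intersecting $C$ with a sufficiently large closed ball and invoking the Weierstrass theorem (compactness) yields a minimizer $z := P_C x \in C$. Uniqueness follows from strict convexity of $\|\cdot\|^2$: if $z_1, z_2 \in C$ were two minimizers at common distance $d$ from $x$, then by convexity their midpoint $\tfrac12(z_1+z_2)$ lies in $C$, and the parallelogram identity gives $\|\tfrac12(z_1+z_2) - x\|^2 = d^2 - \tfrac14\|z_1 - z_2\|^2 < d^2$ unless $z_1 = z_2$, contradicting minimality.

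With $z = P_C x$ fixed, I would exploit convexity of $C$ directly. Fix an arbitrary $y \in C$ and consider the segment $z_t := z + t(y - z)$ for $t \in [0,1]$; convexity guarantees $z_t \in C$. Minimality of $z$ then forces
\begin{equation}
\|z_t - x\|^2 \ge \|z - x\|^2, \quad \forall t \in [0,1].
\end{equation}
Expanding the left-hand side via $z_t - x = (z - x) + t(y - z)$ gives
\begin{equation}
\|z - x\|^2 + 2t\,\langle z - x,\, y - z\rangle + t^2\|y - z\|^2 \ge \|z - x\|^2,
\end{equation}
so that $2t\,\langle z - x,\, y - z\rangle + t^2\|y - z\|^2 \ge 0$ for every $t \in (0,1]$.

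The final step extracts the first-order condition: dividing the last inequality by $t > 0$ yields $2\langle z - x,\, y - z\rangle + t\|y - z\|^2 \ge 0$, and letting $t \to 0^+$ gives $\langle z - x,\, y - z\rangle \ge 0$, i.e.\ $\langle x - z,\, y - z\rangle \le 0$. Since $z = P_C x$ and $y \in C$ was arbitrary, this is exactly $\langle x - P_C x,\, y - P_C x\rangle \le 0$ for all $y \in C$, as claimed.

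I expect essentially no conceptual obstacle in the inequality itself: once the segment $z_t$ is introduced, the argument is a routine quadratic-in-$t$ expansion followed by a one-sided limit. The only place demanding genuine care is the \emph{well-definedness} of $P_C x$ — existence requires closedness and coercivity, while uniqueness requires the strict convexity of the squared norm — which is why I would dispatch it first. (Note that the definition in the excerpt writes $P_C y$ as the infimum value; I read it, as intended, as the unique minimizing \emph{point}.)
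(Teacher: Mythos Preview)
Your proof is correct and is the standard variational argument for the projection characterization. Note, however, that the paper does not actually prove \Cref{thm:BCG}: it is stated as a known result with a citation to \citet{ciarlet2013linear} and then invoked as a tool in the proof of \Cref{lem:reflection}. So there is no ``paper's own proof'' to compare against; what you have written is essentially the textbook derivation one would find in the cited reference, and it is complete as given (including the care you took with existence and uniqueness of $P_C x$).
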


Theorem~\Cref{thm:BCG} is also known as the variational characterization of convex projection. We will use it to prove the following lemma, which serves as a crucial ingredient in the proof of~\Cref{lem:PO-1}.

\begin{lemma} \label{lem:reflection}
    Let $C$ be a closed convex set in $\mathbb{R}^n$. For any $z \in \mathbb{R}^n$, we have
    \begin{equation}
        \langle z',x\rangle \ge \langle z,x\rangle, \quad \forall x \in C,
    \end{equation}
    where $z' = 2P_C z-z$ is the reflection of $z$ w.r.t. $C$. 
\end{lemma}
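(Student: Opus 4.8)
The plan is to reduce the claim about the reflection $z' = 2P_C z - z$ directly to the Bourbaki-Cheney-Goldstein inequality (\Cref{thm:BCG}) applied to $z$. First I would write $p := P_C z$ and observe that, since $z' = 2p - z$, we have $z' - p = p - z$, i.e.\ the reflection has the same projection behaviour as $z$ but with the ``error vector'' $z - p$ flipped in sign. The quantity I want to control is $\langle z' - z, x\rangle$ for an arbitrary $x \in C$; the claim $\langle z', x\rangle \ge \langle z, x\rangle$ is exactly $\langle z' - z, x\rangle \ge 0$.

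The key computation is then the identity $z' - z = (2p - z) - z = 2(p - z)$, so $\langle z' - z, x\rangle = 2\langle p - z, x\rangle$. To turn this into something nonnegative I would use the fact that $\langle p - z, x\rangle = \langle p - z, x - p\rangle + \langle p - z, p\rangle$ and $\langle p - z, x - p\rangle \ge 0$ for all $x \in C$ by \Cref{thm:BCG} (which gives $\langle z - p, x - p\rangle \le 0$, equivalently $\langle p - z, x - p\rangle \ge 0$). However, this alone is not quite enough: the term $\langle p - z, p\rangle$ need not be nonnegative in general, so the naive split does not close.

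Here is where I expect the main obstacle, and the resolution: the statement as written in \Cref{lem:reflection} is almost certainly intended to be applied in the setting of \Cref{lem:PO-1}, where $C$ is a \emph{cone} (the dual cone $\calA^*$), not merely a convex set. For a closed convex cone $C$, one has the extra structure that $0 \in C$ and $2p \in C$ whenever $p \in C$, so $P_C z = p$ implies (by plugging $y = 0$ and $y = 2p$ into \Cref{thm:BCG}) that $\langle z - p, p\rangle = 0$, i.e.\ the residual $z - p$ is orthogonal to the projection. With this orthogonality in hand, $\langle p - z, p\rangle = 0$, and the split above gives $\langle p - z, x\rangle = \langle p - z, x - p\rangle \ge 0$, hence $\langle z' - z, x\rangle = 2\langle p - z, x\rangle \ge 0$, which is the claim. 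So the proof proceeds in three steps: (i) reduce the inequality to showing $\langle p - z, x\rangle \ge 0$; (ii) decompose $\langle p - z, x\rangle = \langle p - z, x - p\rangle + \langle p - z, p\rangle$ and bound the first term by \Cref{thm:BCG}; (iii) use the cone structure of $C$ to kill the second term via the orthogonality relation $\langle z - p, p\rangle = 0$. If the lemma is genuinely meant for arbitrary closed convex $C$, then one should instead interpret the inequality only for $x$ in the relevant translated cone, but given the usage in \Cref{lem:PO-1} the cone hypothesis is the natural reading and makes the argument go through cleanly.
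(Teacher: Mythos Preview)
Your proposal is correct and follows exactly the same route as the paper's proof: reduce to $\langle P_C z - z, x\rangle \ge 0$, split as $\langle P_C z - z, x - P_C z\rangle + \langle P_C z - z, P_C z\rangle$, handle the first term by \Cref{thm:BCG}, and dispatch the second via the orthogonality $\langle z - P_C z, P_C z\rangle = 0$. You are in fact more careful than the paper here: you correctly note that this orthogonality fails for an arbitrary closed convex set (e.g., $C$ the closed unit ball and $z = 2e_1$, where the lemma itself is false since $z' = 0$), and that it holds precisely when $C$ is a closed convex cone---which is the only setting in which \Cref{lem:reflection} is invoked in the proof of \Cref{lem:PO-1}. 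The paper simply asserts $\langle z - P_C z, P_C z\rangle = 0$ without comment.
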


\begin{proof}
    Plugging in $z' = 2P_C z-z$, it suffices to show
    \begin{equation}
        \langle P_C z -z, x \rangle \ge 0, \quad \forall x \in C.
    \end{equation}
    This is true because
    \begin{equation}
        \langle P_C z -z, x - P_C z \rangle \ge 0, \quad \forall x \in C
    \end{equation}
    by~\Cref{thm:BCG}, and that 
    \begin{equation}
        \langle z - P_C z, P_C z\rangle =0.
    \end{equation}
\end{proof}

We are now ready to prove~\Cref{lem:PO-1}.

\begin{proof}[Proof of \texorpdfstring{\Cref{lem:PO-1}}]
For the \textit{if} part, we assume w.l.o.g. that $G = \hat Y^{\top}\hat Y$ is doubly non-negative. This implies that the angle between each pair of optimal predictors is non-obtuse.  
Denote $\calA = \operatorname{cone}(\{\hat y_i\}_{i\in[k]})$ and $\calA^*$ as its dual cone. We have $\calA \subset \calA^*$. 

Our goal is to show that, for every $\|s\|=1$, we can always find $s' \in \calA^*$, such that 1) $\|s'\|=1$; 2) $\langle s', \hat y_i \rangle \ge |\langle s, \hat y_i \rangle|, \ \forall i \in[k]$. This implies that when restricting our discussion to the Pareto front, we can safely ignore the absolute value which appears in the proof of~\Cref{thm:multi-surface-1}. As a consequence, the Pareto front belongs to the surface determined by $Q = G^{-1}$. 

Consider $s' = 2P_{\calA^*}s - s$. It is straightforward to see that $\|s'\|=1$ since it is the reflection of $s$ w.r.t. $\calA^*$. To show 2), we will prove: 
\begin{equation}
    \langle s', x \rangle \ge |\langle s, x \rangle|, \quad \forall x \in \calA.
\end{equation}
We break the discussion into two cases. 

\textbf{Case 1: $\langle s, x \rangle \ge 0$.} Since $x \in \calA \subset \calA^*$, we have
\begin{equation}
    \langle s', x \rangle \ge \langle s, x \rangle = |\langle s, x \rangle|
\end{equation}
by~\Cref{lem:reflection}.

\textbf{Case 2: $\langle s, x \rangle < 0$. } Consider $u=-s$ and denote $\calC := \{a \mid \langle a, P_{\calA^*}s \rangle \ge 0\}$. It is straightforward to see that $\calA \subset \calC$. 

We will show $P_{\calC}u = P_{\calA^*}s-s$. In fact, for any $z \in \calC$, we have
\begin{align*}
    \|z-u\|^2 &= \|z - (P_{\calA^*}s-s) + P_{\calA^*}s\|^2 \\
    &= \|z - (P_{\calA^*}s-s)\|^2 + \|P_{\calA^*}s\|^2 + 2\langle z - (P_{\calA^*}s-s), P_{\calA^*}s \rangle \\
    &\ge \|P_{\calA^*}s\|^2 + 2\langle z, P_{\calA^*}s\rangle \\
    &\ge \|P_{\calA^*}s\|^2 \\
    &= \|P_{\calA^*}s-s-u\|^2. 
\end{align*}
Therefore, we have
\begin{equation}
    s' = 2P_{\calA^*}s-s = 2P_{\calC}u - u.
\end{equation}
With another application of~\Cref{lem:reflection}, we have
\begin{equation}
    \langle s',x\rangle \ge \langle u,x\rangle = \langle -s,x\rangle = |\langle s,x\rangle|. 
\end{equation}
This finishes the proof of the \textit{if} part.

For the \textit{only if} part, we consider the point $p_i$ which achieves maximum value along the $i$-axis. For a given $i$, it is straightforward to see that such $p_i$ is unique, and therefore is a PO of the feasible region. 

Now, $p_i$ belongs to a surface determined by $Q'$, if and only if there exists a non-negative vector $v_i$, such that $Q'v_i = e_i$ (the normal vector at $p_i$ aligns with the $i$-th axis). In other words, $G' e_i \ge 0$ where $G' = (Q')^{-1}$. When \cone does not hold, there does not exist a $G^* \in \calG$, such that $G^*e_i \ge 0$ for all $i \in [k]$. This implies that these $p_i$ must belong to different surfaces.

\end{proof}

\subsection{Proof of \texorpdfstring{\Cref{lem:PO-k}}{Lemma 3.5}}
\label{adxsubsec:proof-lem-PO-k}

\begin{proof}[Proof of \texorpdfstring{\Cref{lem:PO-k}}]
For the \textit{if} part, we assume w.l.o.g. that $Q_\varnothing = G_\varnothing^{-1}$ is doubly non-negative. This essentially implies that $E_\varnothing$ is dominated by all other surfaces in $\calE$, which further implies that $I_\varnothing$ dominates all other surfaces in $\calI$. Therefore, the Pareto front must belong to $I_\varnothing$. 

For the \textit{only if} part, we study the Pareto front of $\calF_{q-1}$ through the lens of $\calF_1$. Specifically, a point $z$ is a PO of $\calF_{q-1}$ if and only if $\calF_{q-1}$ and the non-negative orthant of $z$ intersects only at $z$. Denote the corresponding point of $z$ on $\calF_{1}$ as $z'=t-z$. An equivalent characterization is that $\calF_{1}$ and the non-positive orthant of $z'$ intersects only at $z'$. 
We can therefore consider a special type of $z'$ whose coordinates are zero except for $i,j$-th entry. If $z'$ further lies on a surface $Q$ with $q_{ij} > 0$, then the corresponding $z$ will be a PO of $\calF_{q-1}$.  

When \ctwo does not hold, there exists a row of $Q_\varnothing$ which contains both positive and negative entries. We assume w.l.o.g. the first row of $Q_\varnothing$ satisfies this condition. By the above observation, we can find two PO of $\calF_{q-1}$ that correspond to $Q_\varnothing$ and $Q_1$, respectively. This finishes the proof as desired. 
\end{proof}

\subsection{Proof of \texorpdfstring{\Cref{thm:necessary}}{Theorem 3.8}}
\label{adxsubsec:proof-thm-necessary}

\begin{proof}[Proof of \texorpdfstring{\Cref{thm:necessary}}]
We only prove for the case of $q=1$, and the proof for $q=k-1$ can be done similarly. Suppose $P$ (whose coordinate is $v$) lies at the intersection of two surfaces defined by $Q$ and $Q'$. We denote the intersection of $Q$ and $Q'$ as $\calS$, which is a non-linear manifold with dimension $(k-2)$. 
Since $P$ is a relative interior point of the PF, there exists some $\varepsilon > 0$, such that any point in $S \cap B_\varepsilon(P)$ is a PO. We will show there exists some $P' \in S \cap B_\varepsilon(P)$ (whose coordinate is $v'$), such that the gradients at $P'$ w.r.t. the two surfaces disagree, i.e., 
\begin{equation}
    Q\sqrt{v'} \neq Q'\sqrt{v'}. 
\end{equation}
To see why this is true, note that $Q'$ can be expressed as $Q' = DQD$, where $D$ is a diagonal matrix whose diagonal entries are either $1$ or $-1$. As a consequence, the set 
\begin{equation}
    \{v\mid Qv = Q'v\}
\end{equation}
is a subspace of $\mathbb{R}^{k}$ whose dimension is at most $(k-2)$, so it cannot fully contain a non-linear manifold whose dimension is $(k-2)$. This finishes the proof as desired. 
\end{proof}

\subsection{Proof of \texorpdfstring{\Cref{thm:necessary} without assumptions} \ \ ($k=3$)}
\label{adxsubsec:proof-thm-necessary-no-assumption}
Here we provide a proof for~\Cref{thm:necessary} under $k=3$, without relying on~\Cref{ass:path-connect,ass:non-degenerated}.
\begin{proof}
We prove the necessity of \cone and \ctwo separately. 

\textbf{\cone is necessary.} \quad Suppose \cone is not true. We assume w.l.o.g. that $\langle \hat y_1, \hat y_2 \rangle < 0, \langle \hat y_1, \hat y_3 \rangle >0, \langle \hat y_2, \hat y_3 \rangle > 0$. Now we can write
\begin{equation}
    Q = 
\begin{bmatrix}
q_{11} & q_{12} & -q_{13} \\
q_{21} & q_{22} & -q_{23} \\
-q_{31} & -q_{32} & q_{33}
\end{bmatrix},
\end{equation}
where $q_{ij} > 0$ for all pairs of $(i,j)$. We also have 
\begin{equation} \label{eq:condition}
    q_{11}q_{23} > q_{12}q_{13}, \ q_{22}q_{13} > q_{12}q_{23}, \ q_{33}q_{12} > q_{13}q_{23}. 
\end{equation}
The boundary of the feasible region is formed by $E_\varnothing, E_1, E_2$. Our goal is to find a point $I \in E_\varnothing \cap E_1$, such that 1) $I$ is a PO of the feasible region; 2) the gradients at $I$ w.r.t. $E_\varnothing$ and $E_1$ disagree. We write the coordinate of $I$ as $(x, y=q_{13}^2/s, z=q_{12}^2/s)^{\top}$, where
\begin{equation}
    q_{11}x + \frac{q_{22}q_{13}^2+q_{33}q_{12}^2-2q_{12}q_{23}q_{31}}{s} = 1,
\end{equation}
and $s < q_{11}q_{23}^2 +q_{22}q_{13}^2+q_{33}q_{12}^2-2q_{12}q_{23}q_{31}$. 

It is straightforward to see that 2) can be satisfied as long as $x\neq 0$. Therefore, we will focus on the realization of 1). This further requires two conditions: i) $I$ is a PO w.r.t. $E_\varnothing$ and $E_1$, meaning that the normal vectors are non-negative; ii) $I$ is a PO w.r.t. $E_2$, meaning that the non-negative orthant at $I$ does not intersect with $E_2$. 

Some simple calculations yield that i) is equivalent to
\begin{equation}
    q_{22}\sqrt{y} \ge q_{23}\sqrt{z} + q_{12}\sqrt{x} \quad \text{and} \quad q_{33}\sqrt{z} \ge q_{23}\sqrt{y} + q_{13}\sqrt{x}.
\end{equation}
Setting $x=0$, we have
\begin{equation}
    q_{22}\sqrt{y} > q_{23}\sqrt{z} \quad \text{and} \quad \quad q_{33}\sqrt{z} > q_{23}\sqrt{y}
\end{equation}
by~\Cref{eq:condition}. Therefore, it suffices to show the non-negative orthant at $(0, q_{13}^2/s, q_{12}^2/s)^{\top}$ ($s=q_{22}q_{13}^2+q_{33}q_{12}^2-2q_{12}q_{23}q_{31}$) does not intersect with $E_2$, and by continuity we can find some $x_0>0$ such that i) and ii) hold simultaneously. 

To prove the above claim, suppose $a\ge 0, b\ge q_{13}^2/s, c\ge q_{12^2}/s$. We will show $(a,b,c)^{\top}$ must lie above the surface defined by $E_2$, i.e.,
\begin{equation}
    f(a,b,c) = q_{11}a + q_{22}b + q_{33}c -2q_{12}\sqrt{ab}-2q_{13}\sqrt{ac}+2q_{23}\sqrt{bc} > 1. 
\end{equation}
Let $a^* = \frac{q_{12}\sqrt{b}+q_{13}\sqrt{c}}{q_{11}}$.
We have
\begin{align*}
    f(a,b,c) &\ge f(a^*,b,c) \\
    &= \left(q_{22}-\frac{q_{12}^2}{q_{11}}\right)b + \left(q_{33}-\frac{q_{13}^2}{q_{11}}\right)c + 2\left(\frac{q_{11}q_{23}-q_{12}q_{13}}{q_{11}}\right)\sqrt{bc} \\
    &\ge \frac{q_{11}(q_{22}q_{13}^2+q_{33}q_{12}^2+2q_{12}q_{23}q_{31}) - 4q_{12}^2q_{13}^2}{sq_{11}} \\
    & > \frac{q_{11}(q_{22}q_{13}^2+q_{33}q_{12}^2-2q_{12}q_{23}q_{31})}{sq_{11}} \\
    &=1.
\end{align*}
As a consequence, by choosing a small $x_0$, we can guarantee both 1) and 2). $I$ is therefore a PO that cannot be achieved via scalarization. 

\textbf{\ctwo is necessary.} \quad Suppose \ctwo is not true. We can write
\begin{equation}
    Q = 
\begin{bmatrix}
q_{11} & -q_{12} & -q_{13} \\
-q_{21} & q_{22} & -q_{23} \\
-q_{31} & -q_{32} & q_{33}
\end{bmatrix},
\end{equation}
where $q_{ij} > 0$ for all pairs of $(i,j)$. The inner boundary of $\calF_1$ is formed by $E_1, E_2, E_3$, and the coordinates of their intersection $I$ is given by $v = (q_{23}^2/r, q_{13^2}/r, q_{12}^2/r)^{\top}$, where $r = q_{11}q_{23}^2 +q_{22}q_{13}^2+q_{33}q_{12}^2+2q_{12}q_{23}q_{31}$. Our goal is to show the corresponding $I'$ on $\calF_2$ (i.e., $t-v$) is a PO which cannot be achieved by scalarization. 

To see why this is true, first note that the gradients at $I$ w.r.t. the three surfaces disagree, since $q_{ij}>0$ for all pairs of $(i,j)$. Now, to demonstrate $I'$ is a PO of $\calF_2$, it suffices to show the non-positive orthant of $I$ intersects with $\calF_1$ only at $I$. By symmetry, it suffices to show that $I$ does not dominate any point on $E_1$. 

To prove the above claim, suppose $a\le q_{23}^2/r, b\le q_{13}^2/r, c\le q_{12}^2/r$. We will show, $J = (a,b,c)^{\top}$ must lie below the surface defined by $E_1$ unless $J$ equals $I$. In fact, let
\begin{equation}
    g(a,b,c) = q_{11}a + q_{22}b+q_{33}c+2q_{12}\sqrt{ab}+2q_{13}\sqrt{ac} - 2q_{23}\sqrt{bc}.
\end{equation}
We have
\begin{align*}
    g(a,b,c) &\le g(q_{23}^2/r, b,c) \\ &\le g(q_{23}^2/r, q_{13}^2/r, c) \\
    &\le g(q_{23}^2/r, q_{13}^2/r, q_{12}^2/r)\\
    &=1,
\end{align*}
where the equalities hold iff $I=J$. This finishes the proof as desired. 
\end{proof}

\subsection{Proof of \texorpdfstring{\Cref{thm:sufficient}}{Theorem 3.10}}
\label{adxsubsec:proof-thm-sufficient}

We first present several useful lemmas regarding non-negative matrices. 

\begin{lemma} \label{lemma:unique}
    Let $A$ be a non-negative, irreducible and real symmetric matrix, then there only exists one eigenvector $v$ of $A$ (up to scaling), such that $v > 0$.  
\end{lemma}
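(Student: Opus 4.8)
\textbf{Proof proposal for Lemma~\ref{lemma:unique}.}

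The plan is to invoke the Perron--Frobenius theorem for irreducible non-negative matrices and then exploit symmetry and orthogonality of eigenspaces. First I would recall that since $A$ is non-negative and irreducible, the Perron--Frobenius theorem guarantees that the spectral radius $\rho(A)$ is a simple eigenvalue of $A$, and that there exists an associated eigenvector $v$ with strictly positive entries, unique up to positive scaling. This already establishes \emph{existence} of a positive eigenvector; the remaining task is to rule out the existence of a second, linearly independent positive eigenvector — in particular one associated with a different eigenvalue.

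The key step is the following: suppose for contradiction that $w$ is another eigenvector with $w > 0$ (entrywise), associated with eigenvalue $\lambda$. Since $A$ is real symmetric, eigenvectors corresponding to distinct eigenvalues are orthogonal. If $\lambda \neq \rho(A)$, then $\langle v, w \rangle = 0$; but $v > 0$ and $w > 0$ force $\langle v, w \rangle > 0$, a contradiction. Hence $w$ must be associated with the eigenvalue $\rho(A)$ itself. But $\rho(A)$ is a simple eigenvalue by Perron--Frobenius, so its eigenspace is one-dimensional, meaning $w$ is a scalar multiple of $v$. This shows the positive eigenvector is unique up to scaling, completing the argument. (One should note that the hypothesis ``$v > 0$'' in the statement is entrywise strict positivity, and the conclusion ``only one eigenvector up to scaling'' should be read the same way; the eigenvalue of that eigenvector is necessarily $\rho(A)$.)

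I do not anticipate a serious obstacle here; the lemma is essentially a packaging of standard Perron--Frobenius facts together with the orthogonality of eigenspaces of a symmetric matrix. The only mild subtlety is making sure the orthogonality argument is applied in the right direction — i.e., first eliminating the case of a distinct eigenvalue via orthogonality, and then invoking \emph{simplicity} (not merely the existence of a positive eigenvector) to handle the case $\lambda = \rho(A)$. If one wanted to avoid citing simplicity, an alternative is to argue directly: if $v, w > 0$ are both Perron eigenvectors, consider $v - cw$ where $c = \min_i v_i / w_i > 0$; this vector is non-negative, has a zero entry, and lies in the Perron eigenspace, which by irreducibility forces it to be identically zero, again giving $v = cw$. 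Either route works, and I would present whichever is cleaner given the conventions already established in the paper.
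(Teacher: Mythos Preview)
Your proposal is correct and follows essentially the same approach as the paper. The paper's proof also invokes Perron--Frobenius for existence and uniqueness of the positive Perron eigenvector, and then rules out a positive eigenvector $u$ for any other eigenvalue $\lambda < \rho(A)$ via the computation $\rho(A)\,v^\top u = v^\top A^\top u = v^\top A u = \lambda\,v^\top u$, which is exactly the orthogonality-of-eigenspaces argument you give, just written out inline rather than cited as a known fact about symmetric matrices.
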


\begin{proof}
    Since $A$ is non-negative and irreducible, by Perron-Frobenius theorem~\citep{shaked2021copositive}, the eigenvector associated with the largest eigenvalue $\rho(A)$ (a.k.a. \textit{Perron root}) can be made positive, and is unique up to scaling. We denote it as $v$. 

    We will show the eigenvectors associated with other eigenvalues cannot be positive. In fact, assume $u>0$ is an eigenvector associated with $\lambda < \rho(A)$. Since $A$ is symmetric, we have
    \begin{align}
        \rho(A)v^{\top}u = v^{\top}A^{\top}u = v^{\top}Au = \lambda v^{\top}u.
    \end{align}
    This is a contradiction since $v^{\top}u > 0$ and $\rho(A) > \lambda$. 
\end{proof}

\begin{lemma} \label{lemma:largest}
    Let $A$ be a non-negative and real symmetric matrix. Suppose $v$ is an eigenvector of $A$ such that $v>0$, then $v$ corresponds to the largest eigenvalue of $A$. 
\end{lemma}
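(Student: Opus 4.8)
\textbf{Proof plan for Lemma (the converse of \Cref{lemma:unique}).} The statement to prove is: if $A$ is a non-negative real symmetric matrix and $v > 0$ is an eigenvector of $A$, then the associated eigenvalue is the largest eigenvalue $\rho(A)$ of $A$. The plan is to reduce to the irreducible case already handled in \Cref{lemma:unique}, and otherwise to exploit symmetry directly.

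\textbf{Step 1: the direct symmetric argument.} Let $Av = \lambda v$ with $v > 0$. Since $A$ is non-negative and symmetric, its spectral radius $\rho(A)$ is an eigenvalue (Perron--Frobenius for non-negative, possibly reducible, matrices), and there is a corresponding eigenvector $w \ge 0$, $w \neq 0$, with $Aw = \rho(A) w$. Because $A = A^\top$, compute in two ways the quantity $w^\top A v$: on one hand $w^\top A v = w^\top(\lambda v) = \lambda\, w^\top v$; on the other hand $w^\top A v = (A w)^\top v = \rho(A)\, w^\top v$. Hence $(\rho(A) - \lambda)\, w^\top v = 0$. Now $w^\top v > 0$ since $v > 0$ strictly and $w \ge 0$, $w \neq 0$, so every nonzero coordinate of $w$ contributes a strictly positive term. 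Therefore $\lambda = \rho(A)$, which is exactly the claim.

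\textbf{Step 2: housekeeping.} The only subtlety is making sure the Perron--Frobenius eigenvector $w$ can be taken with $w \ge 0$ and $w \neq 0$ even when $A$ is reducible; this is the standard (weak) form of the theorem for non-negative matrices, which the paper is already invoking via~\citep{shaked2021copositive}, so it may be quoted directly. One should also note $\lambda$ is real (automatic since $A$ is symmetric) and that $\lambda \le \rho(A)$ trivially, so the computation in Step 1 pins it down rather than merely bounding it. I would present the proof essentially as the three-line chain $\rho(A)\, w^\top v = (Aw)^\top v = w^\top A v = \lambda\, w^\top v$ together with the observation $w^\top v > 0$.

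\textbf{Main obstacle.} There is essentially no hard step here; the lemma is a short consequence of symmetry plus the existence half of Perron--Frobenius. The only thing to be careful about is not over-claiming: without irreducibility one cannot assert uniqueness or that $w > 0$ strictly, but one does not need that — $w \ge 0$, $w \neq 0$ paired against a strictly positive $v$ already forces $w^\top v > 0$, which is all the argument uses. So the ``obstacle'' is merely stating the correct weak form of Perron--Frobenius and citing it, rather than any real mathematical difficulty.
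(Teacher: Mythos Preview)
Your proof is correct and is in fact more direct than the paper's. The paper proceeds by putting $A$ into its Frobenius normal form $PAP^\top = B$, observes that symmetry forces $B$ to be block diagonal with irreducible non-negative blocks $B_{ii}$, then applies \Cref{lemma:unique} to each block to conclude that $\lambda = \rho(B_{ii})$ for every $i$, whence $\lambda = \max_i \rho(B_{ii}) = \rho(A)$. Your argument skips this reduction entirely: you invoke the weak Perron--Frobenius theorem once to obtain a non-negative eigenvector $w$ for $\rho(A)$, and then the symmetry identity $\rho(A)\, w^\top v = w^\top A v = \lambda\, w^\top v$ together with $w^\top v > 0$ finishes in one line. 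This is precisely the inner-product trick the paper uses inside the proof of \Cref{lemma:unique}, and you are right that it applies just as well here without first passing to irreducible blocks. The paper's route has the mild advantage of being self-contained given \Cref{lemma:unique} and not needing to quote the reducible form of Perron--Frobenius separately, but your approach is shorter and arguably cleaner; your ``housekeeping'' remark about needing only $w \ge 0$, $w \neq 0$ (not $w > 0$) is exactly the right observation to make it airtight.
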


\begin{proof}
Consider the normal form~\citep{varga1999matrix} of $A$:
   \begin{align} 
    PAP^{\top} = 
\begin{bmatrix}
B_{11} & B_{12} & \cdots & B_{1h} \\
0 & B_{22} & \cdots & B_{2h} \\
\vdots & \vdots & \ddots & \vdots \\
0 & 0 & \cdots & B_{hh}
\end{bmatrix} \triangleq B,
\end{align}
where $P$ is a permutation matrix and $B_{ii}$ ($i \in [h]$) are irreducible. It is straightforward to see that $B$ is non-negative. Furthermore, since a permutation matrix is an orthogonal matrix, $B$ is also symmetric (implying that $B_{ij}=0$ for $i \neq j$) and has the same spectrum as $A$.

Now suppose $Av = \lambda v$ for some $v>0$. Since $B = PAP^{\top}$, we have $B\psi = \lambda\psi$ with $\psi = Pv > 0$. Denote 
\begin{align}
    \psi = \begin{pmatrix}
\psi_1 \\
\psi_2 \\
\vdots \\
\psi_n
\end{pmatrix},
\end{align}
where each $\psi_i$ has the same dimension as $B_{ii}$. Now we have
\begin{align}
    B_{ii}\psi_i = \lambda\psi_i, \quad \forall i \in [h].
\end{align}
Since $B_{ii}$ is non-negative, irreducible and real symmetric, by~\Cref{lemma:unique}, $\psi_i$ corresponds to the largest eigenvalue of $B_{ii}$. Note $B$ is a block diagonal matrix, so the spectrum of $B$ is essentially the union of the spectrum of $B_{ii}$. As a consequence, 
\begin{align}
    \lambda_{\text{max}}(A) = \lambda_{\text{max}}(B) = \max_{i \in [h]} \lambda_{\text{max}}(B_{ii}) = \lambda.
\end{align}
\end{proof}

\begin{proof}[Proof of \texorpdfstring{\Cref{thm:sufficient}}]
We prove the sufficiency of \cone and \ctwo separately. 

\textbf{\cone is sufficient.} \quad
We have demonstrated in~\Cref{lem:PO-1} that the Pareto front belongs to a single surface $E$, whose gram matrix is doubly non-negative. Let $p = (a_1, \cdots, a_k)^{\top}$ be a PO of $E$, and denote $t = \sqrt{p}$. We will show that the tangent plane at this point lies above the feasible region, implying that it can be achieved by scalarization. 

Concretely, the tangent plane is given by
\begin{equation}
    \sum_{i\in[k]}\frac{\sum_{j\in [k]}q_{ij}\sqrt{a_j}}{\sqrt{a_i}}v_i = 1.
\end{equation}
To show it lies above the feasible region, it suffices to prove that it lies above every surface in $\calE$, i.e.,
\begin{equation} \label{eq:quadratic}
    \sum_{i\in[k]}\frac{\sum_{j\in [k]}q_{ij}\sqrt{a_j}}{\sqrt{a_i}}v_i \le \sum_{i\in [k]}q_{ii}^{i_1\cdots i_l}v_i + 2\sum_{1\le i <j \le k}q_{ij}^{i_1\cdots i_l}\sqrt{v_iv_j},
\end{equation}
where $q_{ij}^{i_1\cdots i_l}$ represents the $(i,j)$-th entry of $Q_{i_1\cdots i_l}$. Note~\Cref{eq:quadratic} can be treated as a quadratic form w.r.t. $\sqrt{v}$, so it suffices to prove the corresponding matrix is PSD. Writing compactly, it suffices to show
\begin{equation}
    Q_{i_1\cdots i_l} - \diag\{Qt \oslash t\} = D_{i_1\cdots i_l}(Q-\diag\{Qt \oslash t\})D_{i_1\cdots i_l}
\end{equation}
is PSD. As a consequence, we only need to show the positive semi-definiteness of the following matrix
\begin{equation}
T:= Q-\diag\{Qt \oslash t\}.
\end{equation}
Since $p$ is a PO of $E$, we have $Qt \ge 0$. Let $w = Qt$, we have
\begin{equation}
    T = G^{-1} - \diag\{w\oslash Gw\}.
\end{equation}
Assume w.l.o.g. that $w>0$ (otherwise we simply remove the corresponding rows and columns in $G^{-1}$). Denote
\begin{equation}
    R = \sqrt{\diag\{w\oslash Gw\}}\quad \text{and} \quad \phi = \sqrt{w\odot Gw} > 0. 
\end{equation}
We have
\begin{align*}
    RGR\phi &= RG\sqrt{\diag\{w\oslash Gw\}}\sqrt{w\odot Gw} \\
    &= RGw \\
    &= \sqrt{\diag\{w\oslash Gw\}}Gw \\
    &= \sqrt{w\odot Gw}\\
    &= \phi. 
\end{align*}
By~\Cref{lemma:largest}, we have
$\rho(RGR) = 1$. Since $RGR$ is positive definite, we have
\begin{equation}
    I \succeq RGR \succ 0. 
\end{equation}
This further implies that $R^{-2} \succeq G$, and
\begin{equation}
    T = G^{-1} - R^2 \succeq 0.
\end{equation}

\textbf{\ctwo is sufficient.} \quad 
We have demonstrated in~\Cref{lem:PO-k} that the Pareto front belongs to a single surface $I_\varnothing$. Here we will further show $I_\varnothing$ is concave, and apply the supporting hyperplane theorem~\citep[Section 2.5.2]{boyd2004convex} to {conclude} the proof. In fact, $E_\varnothing$ is parametrized by 
\begin{equation}
    \sum_{i\in [k]}q_{ii}v_i + 2\sum_{1\le i<j \le k}q_{ij}\sqrt{v_iv_j} = 1. 
\end{equation}
Since the first term is a summation of linear functions, $f(x,y) = \sqrt{xy}$ is a concave function and that $q_{ij} \ge 0$, we conclude that $E_\varnothing$ is convex, and $I_\varnothing$ is concave. This finishes the proof as desired. 
\end{proof}

\section{A polynomial algorithm for checking \texorpdfstring{\texttt{C1}}{\cone}}
\label{adxsex:algo-c1}

We explain how~\Cref{algorithm:check_C1} works. 

Essentially, the goal is to find a set of flipping signs $\{s_i\}_{i \in [k]} \in \{+, -\}$ 
that ensure positive correlation between all pairs in $\{s_i\hat y_i\}_{i \in [k]}$ (except for those pairs that are not correlated).
We state an observation which is critical for our algorithm---given a pair $\hat y_i$ and $\hat y_j$, whether their signs are the same or opposite can be determined by whether $\hat y_i$ and $\hat y_j$ is positively or negatively correlated.
Thus, given a determined $s_i$, the signs of its subsequent predictors $s_j,\ j>i$ can be uniquely determined by evaluating $\langle \hat y_i, \hat y_j \rangle$.
This allows us to determine the signs for the predictors $s_i$ in ascending order, and check for potential conflicts in the mean time.

We state one loop invariant in our algorithm---when entering iteration $i$ in the outer loop, $s_i$ has been determined by its \textit{preceding} predictors and no conflict has occurred, or it has remained undetermined (i.e., \texttt{None}) all the way because it is not correlated with any preceding predictors.
At this point, what is left to be done is to examine the \textit{subsequent} predictors. 
If $s_i$ is undetermined (lines 2-7), we attempt to use its subsequent predictors to determine it (line 5), and break out of the loop once it gets determined.
Now that we make sure that $s_i$ is determined after line 9, we can proceed to check the remaining subsequent predictors, to either determine their signs (lines 12-13) or check for potential conflicts (lines 14-15).

We comment that caution needs to be taken for the pairs that are not correlated (i.e., $\langle \hat y_i, \hat y_j \rangle = 0$).
In this case, one cannot determine the sign of $s_j$ by the relationship between these two predictors; basically, the sign has no influence on their correlation.
Thus, the predictor $\hat y_j$ retains the flexibility to be determined by other predictors other than $\hat y_i$. 
Additionally, it is possible that at the end of line 9 $s_j$ is still undetermined; this only occurs when $\hat y_i$ is not correlated with any other optimal predictor.
In this case, $t$ will take the value of $k+1$ and the following loop (lines 10-15) will not be executed.

\begin{algorithm}[htb]
\DontPrintSemicolon
  \caption{An $O(k^2)$ algorithm of checking \protect\cone}
  \label{algorithm:check_C1}
  \KwInput{Optimal predictors $\hat{y}_1, \hat{y}_2, \dots, \hat{y}_k$}
  \KwOutput{\texttt{True} if \cone is met, and \texttt{False} otherwise}
  \SetKwData{cnt}{cnt}
  \SetKwData{s}{s}
  \SetKwData{none}{\texttt{None}}
  \KwInitialize{$s_1 = 1,\ s_2 = \cdots = s_k = \none$}
  \KwDefine{$
\operatorname{sgn}(x) = 
\begin{cases}
1, & \text{if } x > 0 \\
-1, & \text{if } x < 0 \\
\none, & \text{if } x = 0
\end{cases}
$}
\For{$i = 1$ \KwTo $k-1$}{
    \Comment*[l]{\footnotesize Check whether $s_i$ has been determined by its preceding optimal predictors $\{ \hat y_1, \dots, \hat y_{i-1}\}$; if not, determine it using the subsequent ones.}
    \If {$s_i = \none$} { 
        \For {$j = i+1$ \KwTo $k$} {
            \If {$s_j \neq \none$ \textbf{and} $\operatorname{sgn}(\langle \hat y_i, \hat y_j \rangle) \neq \none$ } {
                $s_i \gets s_j \cdot \operatorname{sgn}(\langle \hat y_i, \hat y_j \rangle)$ \;
                \textbf{break}
            }
        }
        $t \gets j+1$ \;
    } \Else {
        $t \gets i+1$
    }
    \Comment*[l]{\footnotesize Now that $s_i$ has been determined, we proceed to check the remaining optimal predictors to either determine the sign for them or check for conflicts.
    }
    \For{$j = t$ \KwTo $k$}{
        \If {$\operatorname{sgn}(\langle \hat y_i, \hat y_j \rangle) \neq \none$} {
            \If{$s_j = \none$}{
                $s_j \gets s_i \cdot \operatorname{sgn}(\langle \hat y_i, \hat y_j \rangle)$
            } \ElseIf { $s_i \cdot s_j \cdot \operatorname{sgn}(\langle \hat y_i, \hat y_j \rangle) = -1$} {
                \KwRet \texttt{False}
            }
        }
    }
}
\KwRet \texttt{True}
\end{algorithm}

\section{Additional experimental details and results}
\label{adxsec:exp}

\subsection{Additional experimental details}
\label{adxsubsec:details}
\textbf{Dataset.}\quad
The SARCOS dataset~\citep{vijayakumar2000locally} consists of the configurations (position, velocity, acceleration) of the robotic arms, and the problem is to predict the torque of the respective arm given its triplet configuration. 
Since our study concentrates on the training procedure and does not concern generalization, we conduct the experiments on the training set alone, where the size of the training set of the SARCOS dataset~\citep{vijayakumar2000locally} is $40,000$.
Regression tasks normally benefit from standardization as a pro-processing step~\citep{hastie2009elements}, 
so we standardize the dataset to zero mean and unit variance. 

\textbf{Tasks.}\quad
Following our theoretical analysis in~\Cref{sec:PF}, we study the scenario where $q=1$. In this case, it is straightforward to see that the minimal $k$ that may lead to a violation of $\cone$ is $3$. 
To show the violation of $\cone$ leading to the failure of scalarization exploring the Pareto Front, we constrain our study to $k\geq 3$; furthermore, for ease of visualization, we focus on $k=3$.
In our experiments, we take the arms $3$, $4$, and $5$ for which do not satisfy $\cone$.


\textbf{Network.}\quad
We use a two-layer linear network for the reason explained in~\Cref{sec:prelim}. We merge the bias term $b$ into the weight matrix $W$ by adding an additional all-one column to the input $X$.
The input dimension of the network is $22$ (seven (position, velocity, acceleration) triplets plus the dummy feature induced by $b$), the hidden size is $q=1$, and the output size is $1$ (i.e., the predicted torque). 
The same network architecture is used for both experiments in scalarization and that in SMTOs.


\textbf{Implementation of linear scalarization.} \quad 
For {linear scalarization}, we uniformly sample $100,000$ sets of convex coefficients from a three-dimensional simplex (i.e., a tetrahedron). 
Concretely, we first sample $m_1, m_2 \stackrel{i.i.d.}{\sim} U(0, 1)$, and then craft $\boldsymbol{\lambda}=(\lambda_1,\lambda_2,\lambda_3)=(\min(m_1,m_2),\max(m_1,m_2)-\min(m_1,m_2),1-\max(m_1,m_2)+\min(m_1,m_2))$ as the weights.
For a given set of convex coefficients, we calculate the optimal value directly based on the analysis in~\Cref{sec:prelim} instead of performing actual training. More specifically, the optimum of the scalarization objective is given by $\hat Y_{q,\Lambda} \Lambda \coloneqq  \sum_{i=1}^q \sigma_i u_iv_i^\T$, from which we can compute the corresponding MSE for each task. 


\textbf{SMTOs.}
We consider two state-of-the-art SMTOs, MGDA~\citep{desideri2012multiple} and MGDA-UB~\citep{sener2018multi}. 
MGDA is an optimization algorithm specifically devised for handling multiple objectives concurrently. It utilizes a composite gradient direction to facilitate simultaneous progress across all objectives.
MGDA-UB is an efficient variant of MGDA that focuses on maximizing the minimum improvement among all objectives. It is a gradient-based multi-objective optimization algorithm aimed at achieving balanced optimization outcomes across all objectives.

\textbf{Implementation of SMTOs.} \quad 
Our code is based on the released implementation\footnote{\url{https://github.com/isl-org/MultiObjectiveOptimization}} of MGDA-UB, which also includes the code for MGDA. 
We apply their code on the SARCOS dataset.
For both methods, 
we use vanilla gradient descent with a learning rate of $0.5$ for $100$ epochs, following the default choice in the released implementation. 
We comment that early stopping can also be adopted, i.e., terminate once the minimum norm of the convex hull of gradients is smaller than a threshold, for which we set to be $10^{-3}$. 

\subsection{Additional experiments on random initialization}
\label{adxsubsec:init}

To eliminate the influence of random initialization, we perform 300 trials for each algorithm using different random seeds.
We filter out solutions whose maximum MSE is larger than 1 for clearer presentation, which leads to 198 solutions for MGDA and 240 solutions for MGDA-UB.
We plot all these solutions in~\Cref{fig:init}.
We see that MGDA and MGDA-UB are consistently capable of finding balanced solutions regardless of the random seed.

\begin{figure}[h]
    \centering
    \begin{subfigure}[b]{0.47\textwidth}
        \centering
        \includegraphics[width=\linewidth]{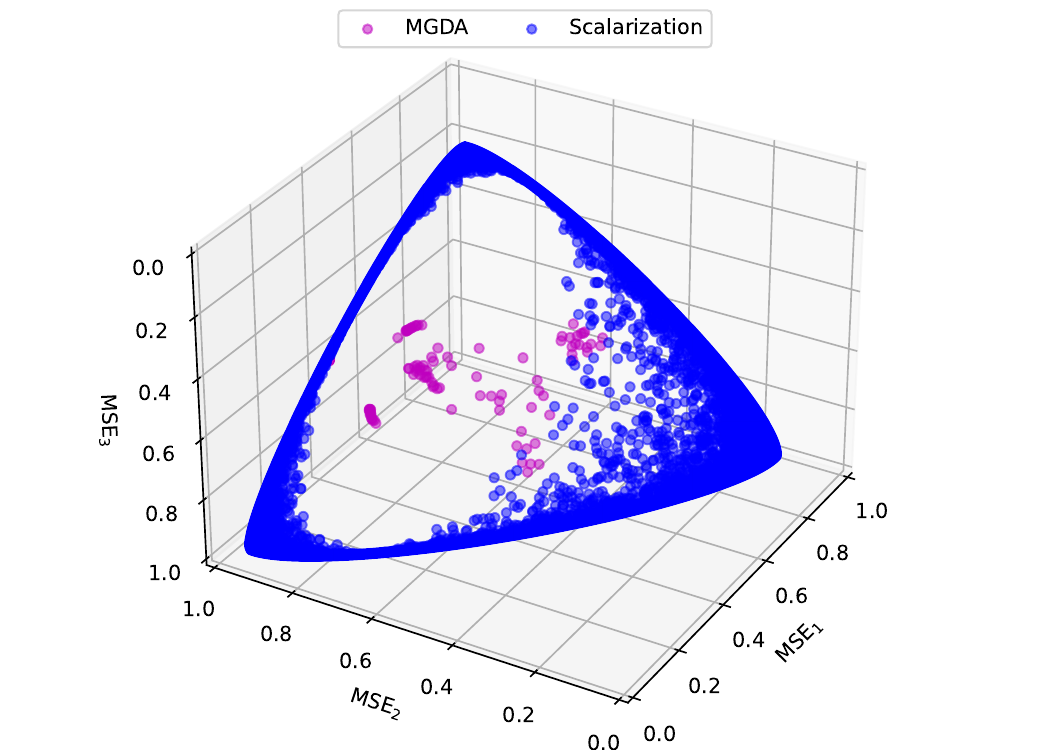}
        \caption{MGDA with multiple initializations}
    \end{subfigure}
    \hfill
    \begin{subfigure}[b]{0.47\textwidth}
        \centering
        \includegraphics[width=\linewidth]{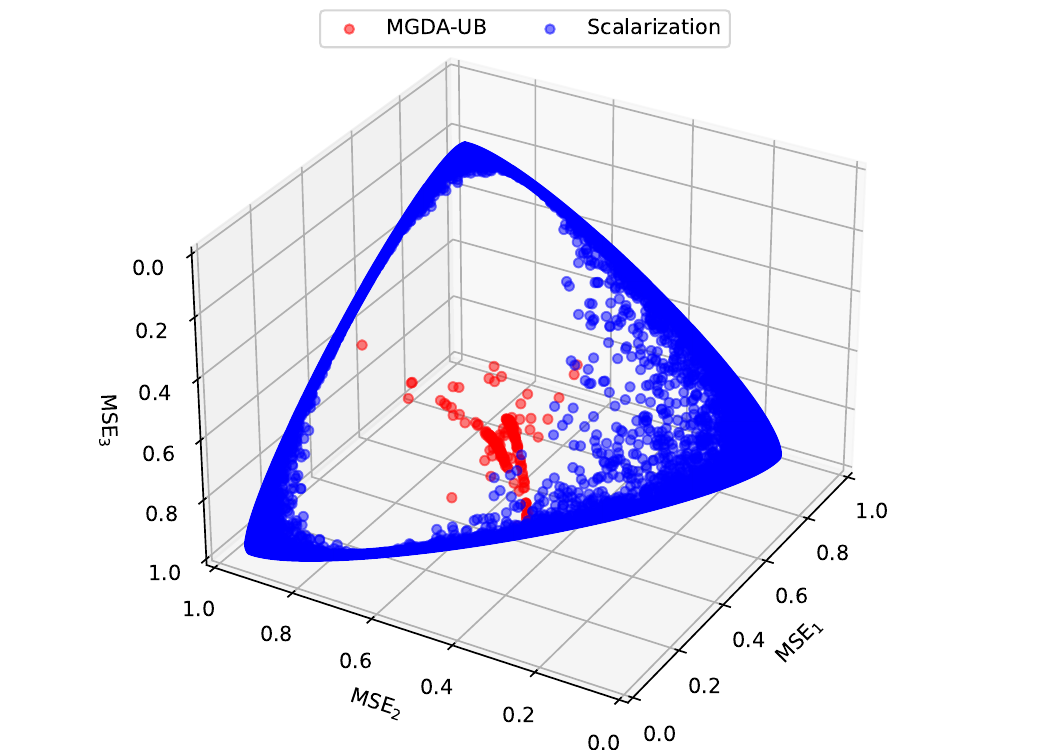}
        \caption{MGDA-UB with multiple initializations}
    \end{subfigure}
    \caption{More solutions of MGDA and MGDA-UB obtained by varying the initializations. Both algorithms tend to find solutions located at the interior of the Pareto front.}
    \label{fig:init}
\end{figure}

\subsection{Additional experiments on randomization}
\label{adxsubsec:randomization}
To verify the effectiveness of randomization as discussed in~\Cref{subsec:remedy}, we plot the region achievable by randomized linear scalarization. Specifically, according to~\Cref{eq:randomization},  
the region can be expressed as the collection of the convex combination of two points from the feasible region. To this end, we randomly sample 100,000 weight pairs (the same number as in~\Cref{fig:exp}). For each weight pair $(w_1, w_2)$, we uniformly draw $t \sim U(0,1)$ and get two corresponding optimal networks 
$f_1$ and $f_2$ by SVD. For each sample, with probability $t$, the model uses $f_1$, otherwise $f_2$, to calculate the MSE. The final result is demonstrated in~\Cref{fig:randomization}. It is straightforward to see that randomization allows scalarization to trace out the PF since there is no hole within the blue region, thus validating our theoretical analysis. We additionally comment that randomization convexifies the feasible region, as such, the solutions found by MGDA and MGDA-UB are dominated.

\begin{figure}[h] 
    \centering
    \includegraphics[width=0.6\linewidth]{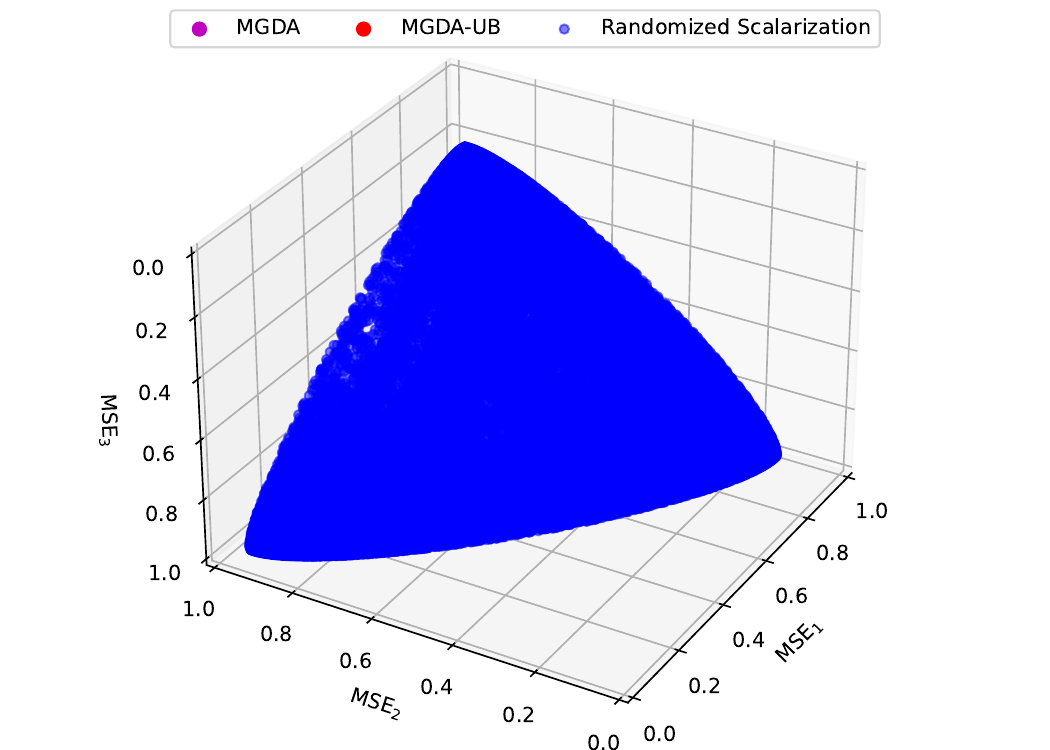} 
    \caption{The region achievable by randomized linear scalarization}
    \label{fig:randomization} 
\end{figure}

\clearpage

\end{document}